\documentclass[letterpaper,twocolumn,10pt]{article}
\usepackage{usenix}

\usepackage{graphicx}
\usepackage{booktabs}
\usepackage{wrapfig}

\usepackage{array}

\usepackage{aliascnt}
\makeatletter
\@ifpackageloaded{complexity}{}{%
\usepackage[classfont=roman,langfont=roman,funcfont=roman]{complexity}}
\makeatother

\usepackage{mathtools}
\usepackage{bbm}
\usepackage{amsmath,amsfonts,amsmath,color,boxedminipage,url,xparse}

\usepackage{xspace}
\usepackage{nicefrac}
\usepackage{hyperref}
\usepackage{cleveref}
\usepackage{tikz}

\usepackage{subcaption}

\usepackage{tikz}
\usepackage{amsmath,amssymb,amsthm,mathtools}

\usepackage{filecontents}
\usepackage[numbers]{natbib}

\def \IsDraft{} 

\providecommand{\remove}[1]{}

\ifdefined\IsDraft
    \newcommand{\authnote}[2]{{\bf [{\color{red} #1's Note:} {\color{blue} #2}]}}
\else
    \newcommand{\authnote}[2]{}
\fi

\newenvironment{mybox}{\begin{center}\begin{tabular}{|p{0.97\linewidth}|c|}   \hline} {  \\ \hline \end{tabular} \end{center}}

\let\originalleft\left
\let\originalright\right
\renewcommand{\left}{\mathopen{}\mathclose\bgroup\originalleft}
\renewcommand{\right}{\aftergroup\egroup\originalright}

\newcommand{\condition}{\;\ifnum\currentgrouptype=16 \middle\fi|\;}

\newcommand{\eps}{\varepsilon}

\newcommand{\MathAlg}[1]{\mathsf{#1}}

\newcommand{\Ensuremath}[1]{\ensuremath{#1}\xspace}

\newcommand{\MathAlgX}[1]{\Ensuremath{\MathAlg{#1}}}

\def\FullBox{$\Box$}
\def\qed{\ifmmode\qquad\FullBox\else{\unskip\nobreak\hfil
\penalty50\hskip1em\null\nobreak\hfil\FullBox
\parfillskip=0pt\finalhyphendemerits=0\endgraf}\fi}

\def\qedsketch{\ifmmode\Box\else{\unskip\nobreak\hfil
\penalty50\hskip1em\null\nobreak\hfil$\Box$
\parfillskip=0pt\finalhyphendemerits=0\endgraf}\fi}

{\proof[#1]\list{}{\leftmargin=1cm\rightmargin=1cm}}
{\endproof%
  \vspace{10pt}
}

\renewcommand{\Pr}{\mathbb{P}}

\newcommand{\Adv}{\MathAlgX{Adv}}

\let\oldnorm\norm
\def\norm{\@ifstar{\oldnorm}{\oldnorm*}}
\makeatother

\def\norm#1{\mathopen\| #1 \mathclose\|}

\usepackage{graphicx} 

\usepackage{amsfonts}
\usepackage{bm}
\usepackage{dsfont}
\usepackage{amsmath}
\usepackage{amsthm}
\usepackage{nicefrac}
\usepackage{bbm}

\usepackage{graphicx}
\usepackage{hyperref}
\usepackage{amssymb}
\usepackage{wrapfig}
\usepackage{caption}
\usepackage{subcaption}
\usepackage[linesnumbered, ruled,vlined]{algorithm2e}

\usepackage{cleveref}

\newtheorem{example}{Example}

\newtheorem{definition}{Definition}

\newtheorem{lemma}{Lemma}
\newtheorem{corollary}{Corollary}

\newtheorem{remark}{Remark}

\usepackage{color}   
\usepackage{hyperref}
\hypersetup{colorlinks=true,linkcolor=blue}

\newcommand{\DIME}{\textbf{\textsf{DIME}}\xspace}

\begin{document}

\date{}

\title{\Large \bf \DIME: Query-Efficient Framework for Membership Inference on Diffusion Models}

\author{{\rm Tue Do}, {\rm Daniel Alabi}\\
University of Illinois at Urbana-Champaign}

\maketitle

\begin{abstract}
Membership inference attacks expose whether individual records were used to train a model, yet existing attacks on diffusion models are largely heuristic and can require substantial query budgets. We introduce \DIME (\textbf{D}enoiser \textbf{I}deal \textbf{M}embership \textbf{E}rror), a theoretically grounded and query-efficient framework for membership inference on diffusion models. Our starting point is an exact characterization of the optimal diffusion denoiser for a finite training set, which reveals that membership leakage is governed by the denoiser's implicit reconstruction error. This error decomposes into two complementary signals: a \emph{bias term}, capturing reconstruction accuracy, and a previously unexplored \emph{local crowding term}, capturing the geometry of nearby training examples. Both admit efficient estimators using only model queries, yielding a practical attack with as few as two queries. Across CIFAR-10/100, STL10-U, CelebA, and ImageNet, \DIME consistently outperforms prior attacks at comparable or substantially lower query cost, improving TPR at 1\% FPR by up to \textbf{$3\times$}; remarkably, its two-query variant can outperform existing 30-query baselines. Finally, we suggest,
discuss, and evaluate specific defenses to counteract such powerful membership tests.
\end{abstract}

\section{Introduction}
\label{sec:introduction}

Diffusion models~\citep{song2019generative,song2020score,ho2020denoising} have become one of the dominant paradigms for high-dimensional generative modeling, powering widely deployed systems for image synthesis~\citep{rombach2022high,ramesh2022hierarchical,saharia2022photorealistic}, and increasingly extending to audio, video, and scientific domains. In addition to sampling quality, diffusion models beat earlier pixel-level autoregressive approaches~\citep{vandenoord2016conditional,vandenoord2016pixel} in sampling speed and throughput.

As with most large-scale machine learning systems, the specific composition of a deployed model's training data is typically not disclosed, making it difficult for an individual, creator, or auditor to independently verify whether particular data was used in training. This opacity raises privacy concerns of several distinct kinds. 
First, whether a particular individual's data was used in training can itself be a sensitive fact independent of the specific content involved: if a model is trained on medical imaging, biometric data, or any dataset implicitly associated with a sensitive category, confirming that a specific individual's data was included can constitute a privacy harm on its own~\citep{homer2008resolving}.
Second, verifying whether a model still reflects a specific individual's data after a deletion request is a practical necessity under data-protection regulation~\citep{gdpr2016,hipaa1996,ccpa2018}, and one an external party generally cannot resolve without some way to test the model's behavior directly. Third, and increasingly salient specifically for generative image models, diffusion models have been shown not merely to learn generalizable visual concepts from their training data, but in some cases to \emph{memorize} and reproduce specific training examples~\citep{carlini2023extracting,somepalli2023diffusion}, raising direct concerns about the copyright status and provenance of generated content that have already surfaced in litigation~\citep{vincent2023ai}.

\emph{Membership inference} offers a formal, auditable way to address each of these concerns: given a trained model and a candidate data point, determine whether that point was used during training~\citep{shokri2017membership}. A successful membership inference attack against a deployed diffusion model would allow an individual, artist, or auditor to establish,  
\textit{without access to the training pipeline itself}, 
if a specific image was used, making membership inference a central tool both for attacking and for auditing the privacy properties of these models.
 
Existing membership inference attacks against diffusion
models~\citep{duan2023diffusion,kong2024efficient,rao2026scorebased,matsumoto2023membership} have made substantial empirical progress, but share a common limitation: the test statistics they employ, including loss values and single-step reconstruction error are chosen based on intuition and empirical validation, without a derived connection to what the \emph{best possible} statistic would be for this task. This leaves open a basic question: 
\begin{quote}
\emph{Given a diffusion model's specific training objective and architecture, what does an optimal membership inference statistic actually look like, and can existing empirical attacks be understood as approximations to it?}
\end{quote}

We answer this question directly. Our first contribution is theoretical, and proceeds in three steps.

First, we derive an exact, closed-form characterization of the
\emph{idealized denoiser}: the MSE-optimal noise-prediction function a diffusion model would compute at any query point, given a finite training set. We show it has an explicit, interpretable form: a responsibility-weighted average over training examples, where each example's weight is a softmax
over its distance to the query. Every prior diffusion membership inference attack evaluates the \emph{real} denoiser at a single point and treats its output as an empirically useful signal; we instead start from what the \emph{optimal} denoiser provably computes, and derive a test statistic from that closed form directly.

Second, we show this idealized denoiser admits an \emph{exact}
bias-variance decomposition of a natural estimation
error: how far a candidate point's model-implied reconstruction deviates
from the candidate itself. The bias term is exactly the response magnitude used, in various forms, by every existing attack we compare against. The variance (crowding) term, measuring how densely the candidate is surrounded by other similar training points, has no analog anywhere in the diffusion membership inference literature. Because this analytical decomposition is exact rather than heuristic, both terms are available from the very same closed-form object used to justify the first term alone, meaning existing attacks have been discarding a mathematically principled, complementary source of signal we capture.
See Figure~\ref{fig:dime-overview}.

Third, we show both terms admit efficient estimators from forward evaluations of the trained network alone, requiring no gradient access, no shadow models, and no full generative sampling. The bias term is estimated by Monte Carlo averaging, and the variance (crowding) term via a Hutchinson trace estimator applied to the denoiser's local Jacobian, using the same perturbed queries already spent on the first estimate. This construction is what makes the theory practically actionable.

Our second contribution is practical: we introduce \DIME
(\textbf{D}enoiser \textbf{I}deal \textbf{M}embership \textbf{E}rror), a test statistic derived directly from this decomposition, together with a query-efficient estimator of each component. We measure query cost as the number of forward passes through the trained model; consistent with
prior diffusion membership inference
work~\citep{rao2026scorebased,matsumoto2023membership,duan2023diffusion,kong2024efficient}, reflecting a \emph{gray-box} threat model in which the adversary has
query access to the model's noise-prediction output at chosen inputs and timesteps, but no access to weights, gradients, or training infrastructure (Section~\ref{sec:threat_model}).

A realistic adversary querying a hosted or commercially deployed diffusion model faces per-query cost, rate limits, or usage monitoring that a query-heavy
attack may not survive in practice. This also distinguishes our threat model from substantially more expensive alternatives: shadow-model
attacks require the adversary to train multiple auxiliary models~\citep{shokri2017membership}, and reconstruction-based attacks require full iterative sampling via the reverse diffusion process per candidate point~\citep{carlini2023extracting,pang2023black}; orders of magnitude more expensive than a single forward pass, and correspondingly harder to deploy unnoticed or at scale. The same efficiency matters
symmetrically for the auditing use case motivating this work: a privacy auditor verifying a deletion request or investigating potential memorization benefits from an attack that is cheap and fast to run repeatedly, not one that requires retraining models or generating samples per candidate point. We show (Section~\ref{sec:experiments}) that \DIME achieves SoTA attack performance with as few as $2$ total queries, and increasing in effectiveness as much as query budget allows.

Across five diffusion checkpoints and datasets with substantially different structure, \DIME outperforms the strongest prior baseline on every standard metric (ASR, AUC, and TPR@1\%FPR) at matched or lower query cost; we observe \DIME to improve
TPR@1\%FPR by up to $3\times$ on CIFAR-10 from SoTA methods while remaining query efficient. 
We additionally evaluate DIME against differential privacy~\citep{dwork2006differential,DworkKMMN06}, the standard formal algorithmic defense
against privacy adversaries, and find that differential privacy remains an effective defense even against our empirically strongest attack.

\begin{figure*}
    \centering
    \includegraphics[width=\textwidth]{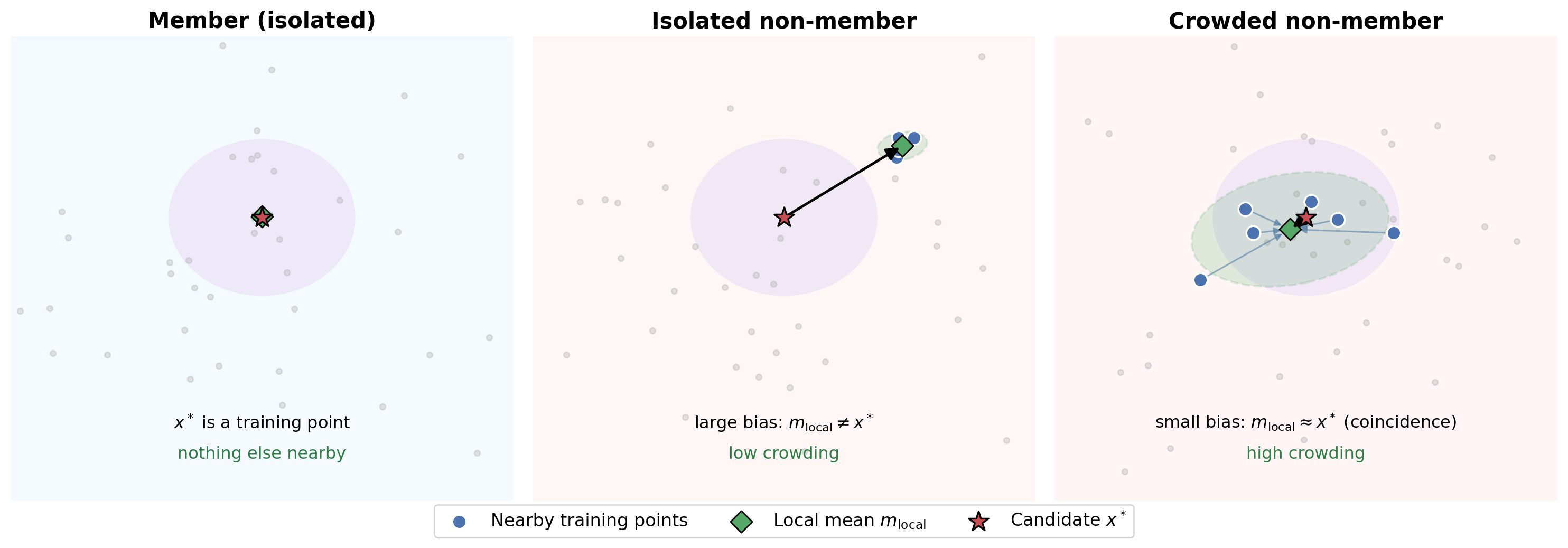}
    \caption{\textbf{Intuition for DIME's two signals.} For a candidate $x^*$, $m_{\mathrm{local}}$ is the responsibility-weighted mean of nearby training points. An isolated member's $m_{\mathrm{local}}$ coincides with $x^*$ (low bias, low crowding); an isolated non-member's deviates substantially (high bias). A non-member inside a dense cluster can have $m_{\mathrm{local}}$ coincidentally close to $x^*$ (low bias) while the contributing points remain widely spread (high crowding, shaded ellipse).}
    \label{fig:dime-overview}
\end{figure*}

\paragraph{Contributions.} In summary, this work makes the following contributions:
\begin{itemize}
\item An exact, closed-form theoretical characterization of the
MSE-optimal denoiser for a diffusion model trained on a finite dataset (Section~\ref{sec:theory}), including an asymptotic isolation result explaining the behavior and the limitations of prior single-query attacks, and an exact bias-variance decomposition of a novel estimation error statistic.
\item \DIME, a query-efficient membership inference attack derived directly
from this decomposition, requiring only one reference model query and no generation of full samples, gradient access, or shadow-model training
(Algorithm~\ref{alg:dime}).
\item An extensive empirical evaluation across five diffusion checkpoints
spanning DDPM~\citep{ho2020denoising} and Guided Diffusion~\citep{DN21} architectures, demonstrating that \DIME outperforms existing baselines at matched or lower query cost across every
evaluated metric, and an evaluation of \DIME's robustness under DP-SGD (Section~\ref{sec:experiments}).
\end{itemize}
\section{Preliminaries and Background}
\label{sec:prelims}

\subsection{Notation}

We write $x\sim\mathbb{D}$ to denote $x$ a sample drawn from distribution $\mathbb{D}$. For some randomized mechanism $\mathcal{M}$, we also can denote $x\sim\mathcal{M}$ as an instance outputted by $\mathcal{M}$. For finite set $\mathcal{D}=\{x_1,\dots,x_n\}$, we write $x\leftarrow\mathcal{D}$ to denote $x$ assigned to a uniformly random element of set $\mathcal{D}$. We typically will elect to use a subscript $x_i$ to refer to an indexed dataset member, and superscript $x^*$ to refer to an attack point of inference to glean a membership decision from. We use superscript $x^{(t)}$ to denote a sample from a particular timestep in the diffusion process. We use capital letters (i.e. $X$) to denote random variables.

\subsection{Diffusion Models}

Diffusion models~\citep{song2019generative,song2020score,ho2020denoising} are a class of latent-variable generative models that represent data generation through a sequence of intermediate random variables, emerging as a leading approach for high-dimensional generative modeling, particularly in continuous domains such as images, audio, and scientific data.

We focus on the discrete DDPM formulation of diffusion models, closely following~\citep{ho2020denoising}. Supposing $x^{(0)}\in\mathbb{R}^d$ is a data sample drawn from unknown data distribution $\mathbb{D}$ to be learned, a diffusion model introduces latent random variables
$$X^{1:T}=(X^1,\dots,X^T),$$
so that the diffusion forward process repeatedly adds Gaussian noise such that
$$X^t|X^{t-1}\sim\mathcal{N}(\sqrt{1-\beta_t}X^{t-1},\beta_t I_d),$$
where for all $t\in[T]$ we have fixed constants $\beta_t\in(0,1)$ called the noise schedule. For notational convenience, we write $\alpha_t=1-\beta_t$ and $\overline\alpha_t=\alpha_1\cdots\alpha_t$. We might sometimes use $\sigma_t=\sqrt{1-\overline\alpha_t}$. From induction, we can also show that for any $t$ (see Equation 4 in~\citep{ho2020denoising}):
$$X^t|X^0\sim\mathcal{N}(\sqrt{\overline\alpha_t}X_0,(1-\overline\alpha_t)I_d).$$
Subsequently, joint (parameterized) distribution $p_\theta(x^{0:T})$ is called the reverse process and defined as a Markov chain with learned Gaussian transitions starting at $p_\theta(x^{(T)})\sim\mathcal{N}(0,I_d)$ and
$p_\theta(x^{(t-1)}|x^{(t)})\sim\mathcal{N}(\mu_\theta(x^{(t)},t), \Sigma_t),$
where
$$\mu_\theta(x^{(t)},t):=\frac{1}{\sqrt{\alpha_t}}\left(x^{(t)}-\frac{\beta_t}{\sqrt{1-\overline\alpha_t}}\hat\varepsilon_\theta(x^{(t)},t)\right),$$
$$\Sigma_t:=\frac{1-\overline\alpha_{t-1}}{1-\overline\alpha_t}\beta_t.$$
Here, $\hat\varepsilon_\theta(x^{(t)},t)$ is a trained noise prediction network that minimizes the expected noise for some $\mathcal{D}_{\mathsf{train}}\sim\mathbb{D}^*$:
\begin{equation}
    \mathcal{L}_t:=\mathbb{E}_{x\leftarrow\mathcal{D}_{\mathsf{train}} ,z\sim\mathcal{N}(0,I_d)}\left[\Vert\hat\varepsilon_\theta(\sqrt{\overline\alpha_t}x+\sigma_tz,t)-z\Vert^2\right].
    \label{eq:train-loss}
\end{equation}

\subsection{Membership Inference as Hypothesis Testing}

Given a model $f(\cdot;\theta)$ trained on a dataset $\mathcal{D}_{\mathsf{train}}$ and a
candidate point $x^*$, membership inference is the task of deciding
whether $x^*\in\mathcal{D}_{\mathsf{train}}$. A membership inference attack (MIA) is formalized as a (possibly
randomized) function $\mathcal{A}$ that, given $x^*$ and some level of
access to $f(\cdot;\theta)$, outputs a bit $\hat b\in\{0,1\}$, representing a guess as
to whether $x^*$ was a training member. In particular, membership inference implicitly invokes a classical hypothesis testing framework. Given some observations from a trained model, our goal is to distinguish between two statistical hypotheses:
\begin{itemize}
    \item $H_0: x^*\notin\mathcal{D}_{\mathsf{train}}$ (Null Hypothesis)
    \item $H_1: x^*\in\mathcal{D}_{\mathsf{train}}$ (Alternate Hypothesis)
\end{itemize}

Membership inference, across every domain in which it has been studied, reduces to a common methodological backbone: a \emph{test statistic} computed from a candidate point and the target model, thresholded to produce a binary membership decision. That is, given some test statistic $\mathcal{T}$ computed from observation, an induced membership inference adversary $\mathcal{A}$ with threshold $\tau$ outputs membership decision:
\begin{equation}
    \mathcal{A}=\mathbf{1}\left[\mathcal{T}\leq\tau\right].
    \label{eq:membership-inference}
\end{equation}
For any given test statistic $\mathcal{T}$, prior membership inference work  in~\citep{sablayrolles2019whitebox} suggests for the Bayes-optimal $\tau$ to be calibrated on a held-out validation set. For each test statistic evaluated in this work we follow that protocol for optimal performance while adhering to our specified threat model. We give the formal security game underlying this task, and the specific access $\mathcal{A}$ is granted in this work, in Section~\ref{sec:threat_model}.

\subsection{Threat Model}
\label{sec:threat_model}

We define the adversarial model $\Adv_{q}$ via security game, drawing from the framework presented in~\citep{carlini2022membership}.

\begin{definition}[Membership inference security game]
    The game proceeds between a challenger $\mathcal{C}$ and adversary $\mathcal{A}$
    \begin{enumerate}
        \item The challenger samples a training dataset $\mathcal{D}\sim\mathbb{D}^*$ from underlying data distribution $\mathbb{D}$, and trains a model $f(\cdot;\theta)\sim\mathcal{M}(\mathcal{D})$ on training set $\mathcal{D}$,
        \item The challenger flips a bit $b$, and if $b=0$ samples a fresh challenge point $x\sim\mathbb{D}$ (such that $x\notin\mathcal{D})$. Otherwise, the challenger selects a point from training set $x\leftarrow\mathcal{D}$.
        \item The challenger sends $x$ to the adversary.
        \item The adversary gets $q$ queries to the model $f(\cdot;\theta)$ and query access to the distribution $\mathbb{D}$, outputs a bit $\hat b\sim\mathcal{A}^{\mathbb{D},f}(x)$. We can think of $\mathsf{In}$ corresponding to bit $1$ and $\mathsf{Out}$ corresponding to bit $0$.
        \item Output $1$ if $\hat b=b$, and 0 otherwise.
    \end{enumerate}
    \label{def:mia-game-def}
\end{definition}

We specialize Definition~\ref{def:mia-game-def} to the diffusion setting by taking $f(\cdot;\theta)$ to be the trained noise-prediction network $\varepsilon_\theta(\cdot,\cdot)$, and by making explicit what a ``query'' means: the adversary submits a pair of noised input and timestep $(g,t)$ and observes $\epsilon_\theta(g,t)$. This is the same gray-box query interface assumed by every prior attack we compare against~\citep{rao2026scorebased,kong2024efficient,duan2023diffusion,matsumoto2023membership}, so that our comparison to prior work reflects a difference in test statistic and query efficiency, not a difference in assumed adversarial capability. By grey-box, we mean that the adversary has query access to $\epsilon_\theta(g,t)$ for adversary-chosen $(g,t)$, but no access to model weights, gradients, or internal activations, and no ability to backpropagate through the model.

\begin{example}
    Given candidate image $x^*$, the adversary chooses $t=10$, and sends $(x^*,t)$ to the model and receives the predicted noise vector $\hat\varepsilon_\theta(x^*,10)$. This entire vector counts as one query.
\end{example}

\begin{example}
    Given candidate image $x^*$, the adversary chooses $t=10$, samples 10 i.i.d. $z_k\sim\mathcal{N}(0,I_d)$, and constructs $g_k=\sqrt{\overline\alpha_t}x^*+\sigma_tz_k$. The adversary sends the 10 pairs $(g_k,t)$ to the model and receives 10 vectors of the form $\hat\varepsilon_\theta(g_k,t)$. This process counts as 10 queries.
\end{example}

We summarize our formal threat model below:

\paragraph{Envisioned attacker.}
The adversary in $\mathcal{A}_q$ has \emph{grey-box} access to the target model. The adversary additionally has query access to the underlying data distribution $\mathbb{D}$ (used to construct a held-out calibration set for selecting attack hyperparameters), but not to the training set $\mathcal{D}$ itself. 
That is, as is standard in empirical membership inference evaluation~\citep{rao2026scorebased,kong2024efficient,duan2023diffusion,matsumoto2023membership}, we allow the adversary auxiliary calibration data, generated independently
of the challenge bit, which may contain labeled examples of known members and
non-members of the target model. This information may be used to select hyperparameters and decision thresholds, but does not reveal the membership of the challenge point itself.
The adversary's goal is a binary decision: whether a specific, adversary-supplied challenge point $x$ was a member of $\mathcal{D}$.

\paragraph{Threat surface.}
The attack targets the \emph{deployment} phase of the ML lifecycle: it requires only query access to an already-trained model exposed via an inference interface (e.g., a hosted diffusion-model API, or any product surface that allows a caller to supply a noised input and timestep and receive the corresponding denoising output), and makes no assumptions about
and requires no access to the data curation or training phases. The attack surface is specifically the model's noise-prediction function as exposed at inference time, not the training pipeline or model weights at rest. 

\paragraph{Generality.}
The attack makes no assumption about model architecture, resolution, or conditioning beyond the standard diffusion forward/reverse process and
training objective; both the theoretical construction (Section~\ref{sec:theory}) and the resulting statistic apply universally across diffusion architecture. We demonstrate this empirically (Section~\ref{sec:experiments})
across five checkpoints spanning a $64\times$ range in input dimensionality and both unconditional and class-conditional settings, without any architecture-specific modification to the attack.

\paragraph{Practicality.}
The attack requires only $q+1$ total model queries: a single shared baseline query plus $q$ perturbed queries, each as a single forward pass through the denoising network, with no gradient computation, no model retraining, and critically, no generation of full samples via the reverse diffusion process. Our attack stands in contrast to shadow-model attacks~\citep{shokri2017membership} (requiring the adversary to train multiple auxiliary models) and
reconstruction-based attacks~\citep{pang2023black} (requiring full iterative sampling per query, orders of magnitude more expensive than a single forward pass). We show
(Section~\ref{sec:experiments}) that in some settings our attack achieves above SoTA results with as few as $2$ total queries, making it deployable against a
realistic, rate-limited or pay-per-query inference API where an adversary's query budget is a genuine, binding constraint.

\subsection{Evaluation of MIAs on Diffusion Models}

Because ground-truth membership is not observable for a model deployed in
practice, evaluating a membership inference attack requires a controlled
setting in which membership is known by construction. Following standard
practice, we partition a pool of candidate points into a \emph{member} set
(points drawn from $\mathcal{D}_{\mathsf{train}}$) and a \emph{held-out}
set (points drawn from the same underlying distribution $\mathbb{D}$ but
excluded from $\mathcal{D}_{\mathsf{train}}$), and report attack
performance over this labeled evaluation set. This labeling is an artifact
of evaluation, not a capability granted to the adversary in
Definition~\ref{def:mia-game-def}, who must still output $\hat b$ from
$x^*$ alone.
 
Given a test statistic $\mathcal{T}$ and this labeled evaluation set,
recall the induced decision rule in Equation~\eqref{eq:membership-inference}, i.e.\ a
\emph{smaller} value of $\mathcal{T}$ indicates membership.
Varying $\tau$
over all thresholds traces out a pair of rate functions,
\[
\mathrm{TPR}(\tau) := \Pr\big[\mathcal{T}\leq \tau \mid H_1\big], \qquad
\mathrm{FPR}(\tau) := \Pr\big[\mathcal{T}\leq \tau \mid H_0\big],
\]
estimated empirically over the labeled evaluation set. We report three
standard metrics derived from these rates throughout this work:
 
\begin{itemize}

\item \textbf{ASR} (attack success rate), the best balanced accuracy
achievable over all thresholds,
\(
\mathrm{ASR} := \max_\tau \tfrac{1}{2}\big(\mathrm{TPR}(\tau) + (1-\mathrm{FPR}(\tau))\big).
\)

\item \textbf{AUC}, the area under the resulting ROC curve
$(\mathrm{FPR}(\tau),\mathrm{TPR}(\tau))$, equal to the probability that
$\mathcal{T}$ assigns a lower value to a randomly chosen member than to a
randomly chosen non-member.
 
\item \textbf{TPR@1\%FPR}, the true-positive rate at the most permissive
threshold whose false-positive rate does not exceed $1\%$,
\(
\mathrm{TPR@1\%FPR} := \max\{\mathrm{TPR}(\tau) : \mathrm{FPR}(\tau)\leq 0.01\}.
\)
\end{itemize}
 
Unlike AUC and ASR, TPR@1\%FPR evaluates attack performance specifically in
the low-false-positive regime: the setting in which an adversary makes
confident, individually-defensible membership claims, which requires a low
false-positive rate regardless of average-case performance elsewhere on
the ROC curve~\citep{carlini2022membership}.
 
These metrics, and the evaluation protocol above, are not specific to any
model class; we apply them to the diffusion setting without modification,
consistent with the generality of the attack itself. What differs across the model checkpoints we evaluate is only the underlying population data distribution $\mathbb{D}$ and the resulting training member/held-out pools, described per
dataset in Section~\ref{sec:experiments}.
\section{Related Work}
\label{sec:related-work}

\subsection{Membership Inference}

Model inversion~\citep{fredrikson2014privacy,fredrikson2015model,carlini2021extracting,carlini2023extracting,zhang2020secret} and membership inference attacks (MIA) are a well studied privacy problem that predate the introduction of the diffusion process as a generative architecture. The notion of a test statistic for membership inference predates machine learning; Homer et al.~\citep{homer2008resolving} first demonstrated this kind of inference showing that an individual's presence in a pooled genomic mixture could be detected from a measurable statistical bias in comparing their known genotype against the mixture's aggregate allele frequencies. In the machine learning context, MIA was first defined for general supervised classification tasks~\citep{shokri2017membership} with shadow model training methods. Subsequently, the literature has moved to MIA for generative image models~\citep{chen2020gan} starting with generative adversarial networks (GANs) for its clear privacy and intellectual property implications.

\subsection{Membership Inference for Diffusion}

Originally, MIA in machine learning was studied in the ``black-box'' threat model, where attacks are performed without knowledge of the model weights or structure, relying only on input and output pairs from a deployed model. Prior work~\citep{sablayrolles2019whitebox} shows that under standard supervised classification, a single scalar is asymptotically sufficient for determining membership, that is more model access to intermediate-layer activations and gradients is uninformative as in a ``white-box'' threat model. The most common class of diffusion model MIAs, ``grey-box'' attacks span between these two extrema, generally having access to weights and/or internal representations, and crucially may query the model for particular test points. One of the first works in the diffusion MIA space \textit{Loss} takes the training loss as membership signal~\citep{matsumoto2023membership}. Similarly, follow-up works in \textit{SecMI} and \textit{PIA} use other diffusion quantities of reconstruction step error and ground-truth trajectories and observe practical improvements in performance~\citep{kong2024efficient,duan2023diffusion}. Finally, \textit{SimA} improves on prior work by introducing Monte Carlo sampling for attack stability, electing to use the score prediction as the test statistic~\citep{rao2026scorebased}. These prior methods are restricted to norm-based statistics, motivated heuristically or empirically. Our contribution goes beyond just another test statistic, we uniquely begin from the theoretical characterization of the query object (the denoiser) under empirical risk minimization and derive an overlooked membership relevant quantity in the crowding term.

\subsection{Misc. Diffusion Security}

Other adversarial work studies backdoor attacks on diffusion models, engineering a compromised training process so when a certain trigger is present, the model emits attacker-chosen content~\citep{chou2023baddiffusion}. We evaluate our attacks exclusively in the image generation domain, but prior work has investigated MIA in other data modalities, including text-to-image and text-to-video generation~\citep{shi2024detecting,li2024membership, hu2025membership}. Unlike our evaluated methods, these methods invoke the text modality and rely on token-level likelihood signals that are specific to auto-regressive generation. Prior work shows that a few poisoned samples can corrupt specific target prompt outputs in text-to-image diffusion generation~\citep{shan2024nightshade}. In contrast, our proposed method is tailored to diffusion models in the image-to-image generative domain and invokes the specific training loss objective. The theoretical underpinning to our metric is not directly applicable to cross-modalities, but we believe our methodology can be extended to detect training members of text-conditioned diffusion models.

\section{Membership Inference Frameworks: Algorithms and Theory}
\label{sec:theory}

In this section we first introduce and characterize the asymptotic diffusion denoiser to ground our new membership inference attack. Subsequently, from our theoretical analysis, we motivate and define a novel diffusion training test statistic which takes advantage of the unique membership error quantity we derive.

\subsection{Roadmap of the Framework}

A perfectly trained MSE denoiser computes a posterior-weighted average over possible training examples. For members, at low noise, this posterior concentrates on the member itself. This suggests measuring \emph{reconstruction error}. The reconstruction error has two pieces: displacement of the posterior mean (``bias'') and dispersion of training examples (``crowding''). The first is essentially what norm-based attacks~\citep{rao2026scorebased,duan2023diffusion,kong2024efficient, matsumoto2023membership} already see; the second is new. Both can be estimated from model queries, and their combination gives \DIME.

\subsection{Idealized Denoiser}

We term the \emph{idealized denoiser} as the theoretical function that minimizes the empirical diffusion training risk described in Equation~\ref{eq:train-loss}. We characterize an idealized denoiser $\hat\varepsilon_{\theta^*}$ that minimizes $\mathcal{L}_t$ defined in~\eqref{eq:train-loss} over all continuous functions described as $f:\mathbb{R}^d\times[T]\rightarrow\mathbb{R}^d$. The Universal Approximation Theorem~\citep{hornik1989multilayer,cybenko1989approximation,hornik1990universal} tells us that any continuous function can be approximated with arbitrary closeness by a neural network, meaning that given enough training over the appropriate loss objective, a learned model approaches the function described in Lemma~\ref{lem:optimal-denoiser}.

\begin{lemma}[Idealized Denoiser]
    Given a fixed training dataset $\mathcal{D}_{\mathrm{train}}=\{x_1,\dots,x_n\}$. The optimal denoiser $\hat\varepsilon^*$ that minimizes the loss:
    $$\mathcal{L}_t:=\mathbb{E}_{x\leftarrow\mathcal{D}_{\mathsf{train}} ,z\sim\mathcal{N}(0,I_d)}\left[\Vert\hat\varepsilon_\theta(\sqrt{\overline\alpha_t}x+\sigma_tz,t)-z\Vert^2\right]$$
    can be explicitly written as
    $$\hat\varepsilon^*(g,t)=\frac{g}{\sigma_t}-\frac{\sqrt{\overline\alpha_t}}{\sigma_t}\sum_{i=1}^n r_i(g)x_i,$$
    where $r_i(g)=\frac{s_i(g)}{\sum_{j=1}^ns_j(g)}$, and $s_i(g):=\exp(-\frac{1}{2\sigma_t^2}\Vert g-\sqrt{\overline\alpha_t}x_i\Vert ^2).$
    \label{lem:optimal-denoiser}
\end{lemma}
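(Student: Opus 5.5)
The plan is to view $\mathcal{L}_t$ as a regression problem and use the fact that, over all measurable functions, mean squared error is minimized by the conditional expectation. Fix $t$ and define the joint law of $(I,Z,G)$: $I$ is uniform on $\{1,\dots,n\}$, $Z\sim\mathcal{N}(0,I_d)$ is independent of $I$, and $G=\sqrt{\overline\alpha_t}x_I+\sigma_t Z$. Then $\mathcal{L}_t=\mathbb{E}\|f(G)-Z\|^2$. For any $f$ with $\mathbb{E}\|f(G)\|^2<\infty$ we have the orthogonal decomposition
$$\mathbb{E}\|f(G)-Z\|^2=\mathbb{E}\|f(G)-\mathbb{E}[Z\mid G]\|^2+\mathbb{E}\|Z-\mathbb{E}[Z\mid G]\|^2,$$
because the cross term vanishes by the tower property. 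Hence the minimizer is $\hat\varepsilon^*(g,t)=\mathbb{E}[Z\mid G=g]$. It is unique up to sets of Lebesgue measure zero, since $G$ has a strictly positive density on $\mathbb{R}^d$. It remains to check that this minimizer is continuous, which puts it inside the class of continuous $f$ that the paper optimizes over; it is in fact smooth.

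Next I compute the conditional expectation. On the event $\{I=i, G=g\}$, $Z$ is determined as $Z=(g-\sqrt{\overline\alpha_t}x_i)/\sigma_t$. So
$$\mathbb{E}[Z\mid G=g]=\sum_{i=1}^n \Pr[I=i\mid G=g]\,\frac{g-\sqrt{\overline\alpha_t}x_i}{\sigma_t}=\frac{g}{\sigma_t}-\frac{\sqrt{\overline\alpha_t}}{\sigma_t}\sum_i \Pr[I=i\mid G=g]\,x_i,$$
where the last step uses $\sum_i \Pr[I=i\mid G=g]=1$. Bayes' rule gives the posterior: $G\mid I=i\sim\mathcal{N}(\sqrt{\overline\alpha_t}x_i,\sigma_t^2 I_d)$ and the prior is uniform, so
$$\Pr[I=i\mid G=g]=\frac{\tfrac1n(2\pi\sigma_t^2)^{-d/2}s_i(g)}{\sum_j \tfrac1n(2\pi\sigma_t^2)^{-d/2}s_j(g)}=r_i(g).$$
This is the claimed formula, and it is manifestly smooth in $g$, which settles the continuity requirement.

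The main obstacle is making the conditioning rigorous. The pair $(I,Z)$ is not simply a function of $G$, and the conditional law of $Z$ given $G$ is a mixture of point masses, so one has to argue carefully rather than just write down the answer. I would avoid disintegration arguments entirely by verifying the defining property of the conditional expectation directly. Call the candidate $h(g)$. For every bounded measurable $\phi$ I need $\mathbb{E}[Z\phi(G)]=\mathbb{E}[h(G)\phi(G)]$. To check this, write both sides as $\frac1n\sum_i\int(\cdot)\,\varphi_{\sigma_t}(g-\sqrt{\overline\alpha_t}x_i)\,dg$, using the substitution $g=\sqrt{\overline\alpha_t}x_i+\sigma_t z$. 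The two sides then agree by the posterior identity, i.e. the density of $G$ times $r_i(g)$ equals $\frac1n\varphi_{\sigma_t}(g-\sqrt{\overline\alpha_t}x_i)$.

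A small further point is that $h$ has linear growth, so $\mathbb{E}\|h(G)\|^2<\infty$. This means $h$ lies in the admissible class and makes the Pythagorean decomposition above legitimate.
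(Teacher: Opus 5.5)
Your argument is correct, but it is organized differently from the paper's proof.

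\textbf{The paper's route.} It never mentions conditional expectations. It substitutes $g=\sqrt{\overline\alpha_t}x_i+\sigma_t z$ in each of the $n$ Gaussian integrals and merges them into a single integral over $g$. It then minimizes the integrand pointwise. At each fixed $g$ this is a weighted least-squares problem in the vector $\hat\varepsilon(g,t)$, with targets $(g-\sqrt{\overline\alpha_t}x_i)/\sigma_t$ and positive weights proportional to $s_i(g)$. The first-order condition gives the weighted mean directly.

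\textbf{Your route.} You use the $L^2$-projection characterization of $\mathbb{E}[Z\mid G]$ and identify the weights through Bayes' rule.

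\textbf{Where they meet.} Both reduce to the same identity. Your verification of the defining property of $\mathbb{E}[Z\mid G]$, after the same substitution, is exactly the paper's rewritten integrand. So the paper's pointwise minimization is effectively the disintegration done by hand. That is why the ``mixture of point masses'' difficulty you flag never arises there.

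\textbf{What each buys.} The paper's version is shorter: it needs only that a pointwise minimizer of the integrand minimizes the integral. Yours gives more:
\begin{itemize}
\item The Pythagorean identity yields uniqueness (almost everywhere, hence everywhere among continuous $f$).
\item It also yields the exact excess risk $\mathbb{E}\Vert f(G)-\hat\varepsilon^*(G,t)\Vert^2$.
\item You explicitly reconcile the measurable and continuous function classes, which the paper glosses over.
\item You actually prove $r_i(g)=\Pr[I=i\mid G=g]$. The paper only remarks on this interpretation after its proof and uses it to motivate the random index $I\sim r(g)$ in the bias--variance decomposition.
\end{itemize}

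One small thing you should state: functions with $\mathbb{E}\Vert f(G)\Vert^2=\infty$ have infinite loss, since $\Vert f(G)\Vert^2\le 2\Vert f(G)-Z\Vert^2+2\Vert Z\Vert^2$. So restricting to square-integrable $f$ loses nothing.
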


\begin{proof}
    We can write the loss function
    $$\mathcal{L}_t=\frac{1}{n}\sum_{i=1}^n\mathbb{E}_{z\sim\mathcal{N}(0,I)}[\Vert\hat\varepsilon_\theta(\sqrt{\overline\alpha_t}x_i+\sigma_tz,t)-z\Vert^2].$$
    Expanding the expectation, and writing $\phi(z)=(2\pi)^{-d/2}\exp(-\frac{1}{2}\Vert z\Vert^2)$ as the standard normal density, we have
    $$\frac{1}{n}\sum_{i=1}^n\int\Vert\hat\varepsilon_\theta(\sqrt{\overline\alpha_t}x_i+\sigma_tz,t)-z\Vert^2\phi(z)dz$$
    $$=\frac{1}{n}\sum_{i=1}^n\int\left\Vert\hat\varepsilon_\theta(g,t)-\left(\frac{g-\sqrt{\overline\alpha_t}x_i}{\sigma_t}\right)\right\Vert^2\phi\left(\frac{g-\sqrt{\overline\alpha_t}x_i}{\sigma_t}\right)\frac{dg}{\sigma_t^d}$$
    $$=\frac{1}{n\sigma_t^d}\int\sum_{i=1}^n\left\Vert\hat\varepsilon_\theta(g,t)-\left(\frac{g-\sqrt{\overline\alpha_t}x_i}{\sigma_t}\right)\right\Vert^2\phi\left(\frac{g-\sqrt{\overline\alpha_t}x_i}{\sigma_t}\right)dg,$$
    where the first equality follows from the change of variables
$g=\sqrt{\bar{\alpha}_t}x_i+\sigma_t z.$
Indeed,
$z=\frac{g-\sqrt{\bar{\alpha}_t}x_i}{\sigma_t},$
and since $g\in\mathbb{R}^d$, the Jacobian determinant of this
transformation is $\sigma_t^d$. Hence
$dz=\frac{1}{\sigma_t^d}\,dg.$
    Recall that we want to find the denoiser function $\hat\varepsilon^*(\cdot,t)$ that minimizes the loss $\mathcal{L}_t$, so we can minimize the integrand pointwise, which means that for any $g$, we want to minimize
    $\ell_g:=\sum_{i=1}^n\left\Vert\hat\varepsilon_\theta(g,t)-\left(\frac{g-\sqrt{\overline\alpha_t}x_i}{\sigma_t}\right)\right\Vert^2\phi\left(\frac{g-\sqrt{\overline\alpha_t}x_i}{\sigma_t}\right).$
    Clearly, this expression is quadratic, so setting $\frac{d\ell_g}{d\hat\varepsilon_\theta}=0$, we get
    $$\sum_{i=1}^n\left(\hat\varepsilon^*(g,t)-\frac{g-\sqrt{\overline\alpha_t}x_i}{\sigma_t}\right)\cdot\phi\left(\frac{g-\sqrt{\overline\alpha_t}x_i}{\sigma_t}\right)=0$$
    $$\implies \hat\varepsilon^*(g,t)\cdot\sum_{i=1}^ns_i(g)=\sum_{i=1}^n\frac{g-\sqrt{\overline\alpha_t}x_i}{\sigma_t}s_i(g)$$
    $$\implies \hat\varepsilon^*(g,t)=\frac{g}{\sigma_t}\sum_{i=1}^n r_i-\frac{\sqrt{\overline\alpha_t}}{\sigma_t}\sum_{i=1}^nr_i(g)x_i$$
    $$=\frac{g}{\sigma_t}-\frac{\sqrt{\overline\alpha_t}}{\sigma_t}\sum_{i=1}^nr_i(g)x_i,$$
    where the last equality comes from the fact that $\sum_{i=1}^n r_i=1$, which gives us the desired statement.
\end{proof}

We show that the idealized denoiser scales the input with an offset described by the posterior dataset ``responsibilities'' $r_i(g)$, which are precisely the posterior probability that training point $x_i$ was the source of the noisy observation $g$, under the same generative process the diffusion model was trained to invert. Structurally, $r_i(g)$ is a softmax over negative squared distances, acting as a smooth, probabilistic version of nearest-neighbor lookup. In other words, the optimal denoiser attends over the training set. We show that as the diffusion noise $\sigma_t\rightarrow 0$ asymptotically, the softmax becomes sharply peaked, and nearly all responsibility concentrates on whichever training point is closest to $g$.

\begin{definition}
    Fix some dataset $\mathcal{D}=\{x_1,\dots,x_n\}$ where $x_i\in\mathbb{R}^d$. Denote $d_i(x)=\Vert x-x_i\Vert$ for any $x\in\mathbb{R}^d$. Denote $V_i=\{x:d_i(x)=\min_j d_j(x)\}$.
\end{definition}

$V_i$ is exactly the Voronoi cell associated with training point $x_i$ in relation to the rest of $\mathcal{D}$.

\begin{lemma}[See Lemma~\ref{lem:convergence-denoiser-appendix}]
    Consider a sequence of functions 
    $r_i^{(j)}(g)=\frac{s_i^{(j)}(g)}{\sum_k s_k^{(j)}(g)},$
    where $s_i^{(j)}(g)=\exp\left(-\frac{1}{2(\sigma^{(j)})^2}\Vert g-a^{(j)}x_i\Vert^2\right)$, $a^{(j)}=\sqrt{1-\sigma^{(j)2}}$, and $\sigma^{(j)}\rightarrow 0^+$. Then
    $r_i^{(j)}(g)\longrightarrow\mathbf{1}_{V_i}(g)\quad\text{a.e.}$
    \label{lem:convergence-denoiser}
\end{lemma}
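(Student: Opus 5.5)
The plan is to use the explicit softmax form of $r_i^{(j)}$ and compare squared distances pairwise. I will assume the training points $x_1,\dots,x_n$ are distinct; if two coincide, their responsibilities tend to $1/2$ on a set of positive measure, so the statement needs this assumption. Dividing numerator and denominator by $s_i^{(j)}(g)$ gives
$$r_i^{(j)}(g)=\Bigl(1+\sum_{k\neq i}\exp\Bigl(-\tfrac{\Delta_{ik}^{(j)}(g)}{2(\sigma^{(j)})^2}\Bigr)\Bigr)^{-1},\qquad \Delta_{ik}^{(j)}(g):=\Vert g-a^{(j)}x_k\Vert^2-\Vert g-a^{(j)}x_i\Vert^2 .$$
Since $\sigma^{(j)}\to0^+$, we have $a^{(j)}=\sqrt{1-(\sigma^{(j)})^2}\to1$. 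Hence, for each fixed $g$, $\Delta_{ik}^{(j)}(g)\to d_k(g)^2-d_i(g)^2=:\Delta_{ik}(g)$. The only technical point is this drift of the centers $a^{(j)}x_i$: the comparison is not against the fixed points $x_i$. Pointwise convergence of $\Delta^{(j)}_{ik}$ is enough, because only its sign in the limit matters.

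Let $B:=\bigcup_{i\neq k}\{g: d_i(g)=d_k(g)\}$. I split $\mathbb{R}^d\setminus B$ into two cases.

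\emph{Case 1: $g\in V_i\setminus B$.} Then $\Delta_{ik}(g)>0$ for every $k\neq i$. Put $\delta=\min_{k\neq i}\Delta_{ik}(g)>0$. For all large $j$ we have $\Delta^{(j)}_{ik}(g)\geq\delta/2$ for every $k\neq i$. Each exponential term is then at most $\exp(-\delta/(4(\sigma^{(j)})^2))\to0$, so $r_i^{(j)}(g)\to1=\mathbf{1}_{V_i}(g)$.

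\emph{Case 2: $g\notin V_i$.} Then some $k$ satisfies $d_k(g)<d_i(g)$, so $\Delta_{ik}(g)=-\delta<0$. For all large $j$ the $k$-th term is at least $\exp(\delta/(4(\sigma^{(j)})^2))\to\infty$, so $r_i^{(j)}(g)\to0=\mathbf{1}_{V_i}(g)$.

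Together the two cases give convergence at every point outside $B$. It remains to show that $B$ is Lebesgue-null. The equation $\Vert g-x_i\Vert^2=\Vert g-x_k\Vert^2$ is equivalent to $2\langle g,x_k-x_i\rangle=\Vert x_k\Vert^2-\Vert x_i\Vert^2$. When $x_i\neq x_k$ this is an affine hyperplane, so $B$ is a finite union of hyperplanes and has measure zero. This yields convergence a.e.

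On $B$ itself the limit may be a fraction such as $1/m$ for ties among $m$ points, and it need not match the indicator; this is exactly why the statement only claims convergence a.e. I expect no real obstacle beyond handling the moving centers uniformly in $k$, and the finiteness of $n$ takes care of that.
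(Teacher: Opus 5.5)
Your proof is correct and follows essentially the same route as the paper: both divide through by $s_i^{(j)}(g)$ and read off the limit of each ratio $s_k^{(j)}/s_i^{(j)}$ from the sign of $d_i(g)^2-d_k(g)^2$ outside the tie set. The differences are cosmetic. The paper handles the drifting centers $a^{(j)}x_i$ by an exact identity that splits off the bounded term $\frac{a(1-a)}{2\sigma^2}(\Vert x_k\Vert^2-\Vert x_i\Vert^2)$, whereas you use pointwise convergence of $\Delta^{(j)}_{ik}$ with a $\delta/2$ margin; your distinctness assumption and hyperplane argument also make rigorous the null-set claim the paper simply asserts.
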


As the diffusion noise tends to zero, the soft posterior responsibilities become a hard nearest-neighbor rule.

Intuitively, Lemma~\ref{lem:optimal-denoiser} states that a perfectly-trained denoiser does not just guess noise in the abstract. For any noisy input, its best possible prediction is built by implicitly asking ``which training images could plausibly have produced
this?'' and blending their identities together, weighted by the posterior of how well each data member can produce the forward noised input. If the noisy input came from an image the model was actually
trained on, and the added noise is small, that image is overwhelmingly the best match: the model's responsibility blend collapses almost entirely onto it, and its prediction becomes highly accurate, nearly canceling out the true noise exactly. If the input instead came from an image the model has never seen,
there is no single, exact match to fall back on, but only an approximate blend of similar-looking training images, differing from the actual noise added to the non-member. This mismatch is not just larger on average for non-members; the same underlying construction lets us measure a second, complementary signal, particularly how much that blend of ``similar-looking'' images is spread out, or crowded together, around the query, giving two independent, theoretically grounded handles on membership. Lemma~\ref{lem:convergence-denoiser} shows that as the diffusion noise tends to zero, the idealized denoiser becomes a simple affine function of the input, precisely offset by the nearest dataset member, suggesting the best membership test lies in early timesteps. See Appendix~\ref{sec:appendix-experiments} for plots of the idealized denoiser on synthetic data. However, the idealized denoiser at low timesteps approaches jump discontinuity at Voronoi boundaries, which we find empirically to not be well captured by actual neural networks, resulting in a gap from our theoretical contribution, and our attacks tend to peak in performance early but not at the beginning of the diffusion timescale (see Figure~\ref{fig:stability}).

\subsection{Test Statistic}

The responsibility distribution $r(g)$ gives a direct, intuitive account of why detecting membership requires more than checking a single number as in prior attacks. Consider three cases, distinguished only by the local geometry of the training set around a candidate $x^*$.
 
\textbf{Case 1: an isolated member.} If $x^*\in\mathcal{D}$ and no other
training point lies nearby, responsibility concentrates almost entirely on
$x^*$ itself: $r_{x^*}(g)\to1$ as $t\to0$ (Lemma~\ref{lem:convergence-denoiser}).
The model's implicit reconstruction of the candidate's source is, in
expectation, exactly $x^*$, and because responsibility is concentrated on
a single point, there is essentially no disagreement across the (one)
candidate source either.
 
\textbf{Case 2: an isolated non-member.} If $x^*\notin\mathcal{D}$ but sits
near a single, otherwise-isolated training point $x_j\neq x^*$, the
posterior instead concentrates on $x_j$: $r_j(g)\to1$. Responsibility is
still sharply concentrated on one point, but the model's reconstruction is
now centered on the \emph{wrong} point, and it differs substantially from
$x^*$. This mismatch alone is enough to correctly flag $x^*$ as a
non-member, which is the basis of prior
diffusion membership inference attacks~\citep{rao2026scorebased,matsumoto2023membership,kong2024efficient,duan2023diffusion}.
 
\textbf{Case 3: a non-member inside a dense cluster -- the case that
motivates our construction.} Suppose instead $x^*\notin\mathcal{D}$, but several training points surround it closely and roughly symmetrically. No single point dominates the posterior, and responsibility spreads across the nearby cluster. Critically, the \emph{weighted average} of these several
nearby points can land, by simple geometric coincidence, very close to $x^*$ itself, even if $x^*$ matches none of them individually. The model's reconstruction therefore looks, at a glance, just as accurate as in the first case: a statistic that only checks how close the reconstruction lands
to $x^*$ cannot tell these two cases apart, which motivates our new attack method.

\paragraph{Membership as estimation error.}
The idealized denoiser derived above is, by construction, the function that most accurately recovers a training example from its noised observation. For a genuine training point $x_i$, this gives it a distinguishing
property: the denoiser's responsibility distribution $r(\sqrt{\overline\alpha_t}x_i)$ concentrates on $x_i$ itself in the small-noise limit (Lemma~\ref{lem:convergence-denoiser}), meaning the model's implicit \emph{reconstruction} of the identity of the
source point is, in expectation, exactly correct. For a point $x^*$ that was not
in the training set, no such guarantee exists; instead the model's best
reconstruction is some responsibility-weighted average of whichever training points happen to lie nearby, which do not necessarily coincide with $x^*$ itself.

This suggests a direct membership statistic: rather than testing attack point $x^*$'s
response against a hypothesized distribution, we measure \emph{how far the
idealized model's implicit reconstruction of $x^*$ deviates from $x^*$ itself}, e.g. an
estimation error. Members should exhibit small estimation error, whereas non-members derive larger error.

\paragraph{Defining the estimation error.}
Denote $I\sim r(g)$ the random variable indexing a training point drawn according to the model's responsibility distribution at $g=\sqrt{\overline\alpha_t}x^*$. That is, $I$ is defined as the random variable such that
$$\Pr(I=i|g)=r_i(g),$$
and let $Y=\sqrt{\bar\alpha_t}x_I$ be the corresponding (scaled) training point random variable. We
define the estimation error of the optimal denoiser at $x^*$ as
\begin{equation}
M(x^*,t) \;=\; \mathbb{E}\big[\|Y - \sqrt{\overline\alpha_t}x^*\|^2\big],
\label{eq:estimation_error}
\end{equation}
the expected squared distance between $x^*$ and a training point drawn in
proportion to how much the model currently attributes it to $x^*$. This is
a natural, direct measure of reconstruction quality: it is small precisely
when the model's implicit best guess at $x^*$'s identity is reliably close
to $x^*$ itself, and large when it is not.

\paragraph{Exact decomposition.}
The standard bias-variance identity~\cite{keener2010theoretical}
applied to $Y$ gives
\begin{equation}
M(x^*,t) = \underbrace{\|\mathbb{E}[Y]-\sqrt{\overline\alpha_t}x^*\|^2}_{\text{bias}^2}
\;+\; \underbrace{\mathrm{tr}(\mathrm{Cov}(Y))}_{\text{variance}}.
\label{eq:bias_variance}
\end{equation}

\begin{example}
    Consider a dataset with two points $\{-1,1\}$. For $x^*=0$, $Y=-1$ with probability $1/2$, and $Y=1$ with probability $1/2$, so $\mathbb{E}[Y]=0$ and so the bias is zero despite $x^*$ not being a training point. But the variance of $Y$ is 1. This motivating example necessitates the variance/crowding term.
\end{example}

From Corollary~\ref{cor:error-decomposition} in Appendix~\ref{sec:appendix-theory},
$\hat\varepsilon^*(g,t) = \frac{1}{\sigma_t}(g-\mathbb{E}[Y]),$
so the bias term is exactly $\sigma_t^2\|\hat\varepsilon^*(g,t)\|^2=:\sigma_t^2 B^2$,
where $B$ is the response magnitude of the optimal denoiser at $x^*$. The variance term is $\mathrm{tr}(C)=:V$, where
$C=\mathrm{Cov}(Y)=\mathrm{Cov}_{I\sim r(g)}[\sqrt{\overline\alpha_t}x_I],$ 
which is the expected squared distance of a responsibility-sharing training point
from the weighted centroid of its neighborhood, which we show has a closed-form estimator via
the Hutchinson trace estimator~\citep{hutchinson1989stochastic} applied to $\nabla_g\hat\varepsilon^*(g,t)$.
Thus
\begin{equation}
M(x^*,t) = \sigma_t^2 B^2 + V,
\label{eq:M_decomposition}
\end{equation}
exactly; the estimation error decomposes, with no approximation, into two
independently estimable and individually meaningful quantities: 1. how far
the average nearby training point sits from $x^*$, and 2. how spread out the nearby training points are around one another. Membership manifests in
both: an isolated member drives both terms toward zero as $t\to0$
(Lemma~\ref{lem:convergence-denoiser}), and we find that a non-member's estimation error can be
large through either channel, from a poorly-reconstructed average location and/or a
crowded neighborhood (See Figure~\ref{fig:bias-variance}).

\paragraph{Finite-sample estimators.}
Neither $B$ nor $V$ is directly observable; both are defined in terms of an expectation over the full training set. We estimate each using only forward evaluations of $\hat\epsilon^*$ at $q$ randomly perturbed points around $g=\sqrt{\overline\alpha_t}x^*$. For step size $\gamma>0$ and $z_1,\dots,z_q\overset{\text{iid}}{\sim}\{-1,+1\}^d$ (Rademacher), we can define $\hat B$ (estimator of $B$) as:
\begin{equation}
\hat B(x^*,t,\gamma;q) \;=\; \frac{1}{q}\sum_{k=1}^q \big\|\hat\varepsilon_\theta(g+\gamma\sigma_t z_k,\,t)\big\|,
\label{eq:hat_B}
\end{equation}

a Monte Carlo estimate of the expected response magnitude at radius $\gamma$. From Lemma~\ref{lem:V-error-bias} in Appendix~\ref{sec:appendix-theory}, we can define $\hat V$ estimating an affine transformation of $V$ using a Hutchinson trace estimator~\citep{hutchinson1989stochastic} of the gradient of the idealized denoiser $\mathrm{tr}(\nabla_g\hat\varepsilon^*(g,t))$, incurring only $O(\gamma)$ estimator bias. We define explicitly
\begin{equation}
\hat V(x^*,t,\gamma;q) \;=\; \frac{1}{q}\sum_{k=1}^q \frac{z_k^\top\big(\hat\varepsilon_\theta(g+\gamma\sigma_tz_k,\,t)-\hat\varepsilon_\theta(g,t)\big)}{\gamma\sigma_t}.
\label{eq:hat_V}
\end{equation}
 
Both estimators share the same $q$
perturbed queries and a single base query at $g$, for a total query cost of $q+1$ per evaluated $(t,\gamma)$. See Lemmas~\ref{lem:squared-bias-estimator} and~\ref{lem:V-error-bias} in Appendix~\ref{sec:appendix-theory} for more detailed analysis of our finite-sample estimators of the membership-error bias-variance decomposition.

\paragraph{From the ideal quantity to a practical statistic.}
Defined above are finite-sample estimators $\hat B$ and
$\hat V$ of $B$ and $V$ in Equation~\eqref{eq:M_decomposition},
each requiring only forward evaluations of $\hat\varepsilon_\theta$. We find empirically that allowing a free weight hyperparameter $\phi$ which tunes the relative weight of the two estimation error components improves attack performance at low tuning cost; in practice we calibrate the relative weighting of the two error components directly against held-out data, remaining in line with the adversarial model's knowledge of the data distribution (Definition~\ref{def:mia-game-def}), rather than fixing it at its population-exact value, which finally yields \DIME (Denoiser Ideal Membership Error):
\begin{equation}
\mathrm{\DIME}(x,t,\gamma,\phi) = \cos\phi\cdot\hat B^2 + \sin\phi\cdot\hat V,
\label{eq:DIME}
\end{equation}
the calibration-weighted combination of the two estimation error components of the ideal
estimation error, with $t$, $\gamma$, and $\phi$ selected by grid search on a held-out split. Because $(\cos\phi,\sin\phi)$ is the standard parameterization of every point on the unit circle and only the direction (ratio), affects test statistic performance, restricting the combination $w_1\hat B+w_2\hat V$ to unit-norm weights loses no achievable ranking while reducing the search from  two free parameters down to one. Unlike the fixed-weight construction, this combination adapts to standard estimation error as well as which error component is, in practice, more reliably estimable at the query budget available, while remaining a direct estimator of the same quantity: how far the optimal denoiser's implicit reconstruction of $x^*$ deviates from $x^*$ itself.


\begin{algorithm}[t]
\DontPrintSemicolon
\caption{\DIME: Denoiser Ideal Membership Error}
\label{alg:dime}
\KwIn{candidate $x^*$; timestep $t$; perturbation radius $\gamma$; query
budget $q$; calibrated combination angle $\phi$; calibrated threshold
$\tau$}
\KwOut{membership decision $\hat b \in \{0,1\}$}

$g \gets \sqrt{\bar\alpha_t}\, x^*$\;
$\epsilon_{\text{base}} \gets \hat\epsilon_\theta(g, t)$ \tcp*{1 query}
$\text{bias\_sum} \gets 0$; $\text{crowd\_sum} \gets 0$\;
\For{$k = 1$ \KwTo $q$}{
    Draw $z_k \sim \mathrm{Rademacher}(\{-1,+1\}^d)$\;
    $g_k \gets g + \gamma \sigma_t z_k$\;
    $\epsilon_k \gets \hat\epsilon_\theta(g_k, t)$ \tcp*{1 query}
    $\text{bias\_sum} \gets \text{bias\_sum} + \|\epsilon_k\|$\;
    $\text{crowd\_sum} \gets \text{crowd\_sum} + \dfrac{z_k^\top(\epsilon_k - \epsilon_{\text{base}})}{\gamma \sigma_t}$\;
}
$\hat B \gets \text{bias\_sum} / q$\;
$\hat V \gets \text{crowd\_sum} / q$\;
$\mathrm{score} \gets \cos \phi\cdot \hat B^2 + \sin \phi\cdot \hat V$\; 
$\hat b \gets \mathbf{1}[\mathrm{score} \leq \tau]$\;
\Return{$\hat b$}
\end{algorithm}

We provide a precise algorithm of our \DIME method in Algorithm~\ref{alg:dime}. As a brief aside, our implementation standardizes the $\hat{B},\hat{V}$ terms in hyperparameter calibration search for numerical convenience. With an explicit definition, it remains to evaluate the effectiveness of our new attack on diffusion membership inference. In the next section, we design and run experiments to quantify the membership signal contained in our idealized denoising estimation error test statistic.

\section{Experiments: Setup and Evaluation}
\label{sec:experiments}

Although membership inference has been extensively studied for discriminative and generative models, diffusion models have a fundamentally different iterative denoising process and query interface, which may affect both the design and effectiveness of membership inference attacks. Understanding these differences is important for accurately assessing privacy risks in diffusion models. Our work on the idealized denoiser (Lemma~\ref{lem:optimal-denoiser}) characterizes the model outputs, raising specific testable questions about both when existing attacks should be expected to succeed or fail, and what additional structure an idealized attack could exploit that prior heuristics do not. We aim to empirically answer the following research questions:

\noindent{\textbf{RQ1.}} Our attack introduces a membership error variance crowding term which captures the denoiser's local structure around an attack point, which has no analog in prior membership inference literature. Does a test statistic grounded in idealized membership error carry membership-relevant information beyond previous simple query-based methods? 

\noindent{\textbf{RQ2.}} How does the performance of diffusion-model membership inference attacks change under different query budgets, and which attack provides the best trade-off between privacy leakage detection and query cost (e.g., inference time or computation) when the number of queries is limited?

\noindent{\textbf{RQ3.}} The theoretical characterization makes no assumption about
resolution, conditioning, dataset structure, or training procedure. Does the resulting attack's advantage over prior methods persist when these
assumptions are stressed, i.e. at substantially higher resolution and under class-conditioning (Guided Diffusion / ImageNet), under non-standard dataset geometry (STL10-U, CelebA), and under a training procedure with a formal privacy guarantee (DP-SGD)?

As a quick summary, we find the answer to all three research questions is yes. On \textbf{RQ1}, our \DIME test statistic carries substantially more signal than any single-query statistic in prior work;
at matched query cost, our method improves TPR@1\%FPR by up to $3\times$ over the strongest baseline (SimA-MC) on CIFAR-10 (50.8\% vs.\ 16.1\%). On \textbf{RQ2}, this advantage holds even at minimal query budgets: our cheapest setting (2 queries) already matches or exceeds every baseline's \emph{most expensive} setting (30 queries) on multiple datasets, including a $1.5\times$
TPR@1\%FPR improvement over SimA-MC's best multi-query result on ImageNet (14.3\% vs.\ 9.33\%) at a fraction of the query cost. On \textbf{RQ3}, this advantage persists under substantial architectural and distributional stress, maintaining the best per-query performance across dataset and model fidelity while
disappearing entirely under DP-SGD: our method, like every baseline, collapses to nearly chance-level performance (AUC within $48$-$53\%$) at every tested privacy budget, confirming that DP-SGD remains an effective defense even against our strongest attack evaluated in this work.

\subsection{Datasets and Checkpoints}

We evaluate previous baselines and our attack over five benchmark datasets spanning a deliberate range of scale and domain, namely CIFAR-10/100~\citep{krizhevsky2009learning}, STL10-U~\citep{coates2011analysis}, CelebA~\citep{liu2015deep}, and ImageNet-1k~\citep{russakovsky2015imagenet}. We briefly describe how model weights and data splits for each dataset are obtained for reproducibility. All experiments are done on a single NVIDIA A40 GPU.

\begin{itemize}
    \item \textbf{CIFAR-10/CIFAR-100:} We reuse the public model checkpoints and dataset member/held-out splits (25k/25k) drawn from~\citep{duan2023diffusion}.
    \item \textbf{STL10-U:} We trained a model on a single NVIDIA A40 GPU node from scratch for 18k training steps with 10k/10k member/heldout splits.
    \item \textbf{CelebA:} We trained a model from scratch on a single NVIDIA A40 GPU node for 60k training steps with 30k/30k member/heldout splits.
    \item \textbf{ImageNet-1k:} Similarly to prior literature~\citep{rao2026scorebased}, we reuse public Guided Diffusion model~\citep{DN21} checkpoints trained on ImageNet-1k, and use the validation set ImageNetV2~\citep{recht2019imagenet} as the held-out set.
\end{itemize}

\paragraph{Baselines.} We use prior membership inference attack baselines \emph{Loss}~\citep{matsumoto2023membership}, \emph{PIA}~\citep{kong2024efficient}, \emph{SecMI}~\citep{duan2023diffusion}, \emph{SimA / SimA-MC}~\citep{rao2026scorebased}. These attacks compute test statistics based on quantities in the diffusion process such as diffusion trajectories, estimated score norm, and reconstruction error, in comparison to our idealized denoiser-based attack.
We define their test statistics below. All tests are based on tunable hyperparameter thresholds for determining  membership. These baselines are chosen in particular for their SoTA performance in diffusion training membership inference, across varying query budgets~\citep{rao2026scorebased}.

In each of the methods below, $\varepsilon$ is independently sampled standard Gaussian noise used to compute the test statistic. The Loss criterion proposed in~\citep{matsumoto2023membership} uses the training loss function, or:
$$\mathcal{T}_{Loss}(x,t)=\Vert\varepsilon-\hat\varepsilon_\theta(\sqrt{\overline\alpha_t}x+\sigma_t\varepsilon, t))\Vert$$

SecMI of~\citep{duan2023diffusion} takes instead the reconstruction error of a noising and denoising step, taking the following statistic:
$$\mathcal{T}_{Sec}(x,t)=\Vert \sqrt{1-\overline\alpha_t}(\hat\varepsilon_\theta(x,t)-\hat\varepsilon_\theta(\sqrt{\overline\alpha_{t+1}}x+\sigma_{t+1}\varepsilon,t+1))\Vert$$

PIA introduced by~\citep{kong2024efficient} uses the ground-truth diffusion trajectory to determine membership, using:
$$\mathcal{T}_{PIA}(x,t)=\Vert\hat\varepsilon_\theta(x,0)-\hat\varepsilon_\theta(\sqrt{\overline\alpha_t}x+\sigma_t\hat\varepsilon_\theta(x,0),t)\Vert$$

Finally, SimA~\citep{rao2026scorebased} uses the norm of the predicted score as the decision criterion, that is:
$$\mathcal{T}_{SimA}(x,t)=\Vert\hat\varepsilon_\theta(x,t)\Vert_p$$
and we also evaluate a more stable Monte-Carlo variant
$$\mathcal{T}_{SimA}(x,t)=\frac{1}{N}\sum_n\Vert\hat\varepsilon_\theta(x_t,t)\Vert,\; x_t=\sqrt{\overline\alpha_t}x+\sigma_t\varepsilon_n,$$
where $\varepsilon_n\sim\mathcal{N}(0,I)$.
We re-implement all of the methods for experimental consistency, drawing mainly from the authors' provided codebases.

\paragraph{Evaluation protocol.} Member/held-out sets are split into disjoint calibration and test halves. All test hyperparameters (including timestep selection) are selected by maximizing the AUC on the calibration half only; all reported metrics are computed once on the held-out test half, which is a stricter protocol than in prior work~\citep{rao2026scorebased}.

\paragraph{Metrics.} We use the following metrics in line with prior literature: ASR (attack success rate), AUC (Area Under ROC Curve), TPR@1\% FPR (fraction of true members correctly identified as members where at most 1\% of non-members are misclassified), which was introduced in~\citep{carlini2022membership}. We also note the number of model queries each attack uses in our tables, especially pertinent to SimA and our own attacks which improve with more model queries.

\paragraph{Model architectures.}
For CIFAR-10, CIFAR-100, STL10-U, and CelebA, we use the standard DDPM U-Net
architecture of~\cite{ho2020denoising} (base channels 128, channel
multipliers $[1,2,2,2]$, 2 residual blocks per resolution, self-attention at
$16{\times}16$, GroupNorm normalization and Swish activations throughout),
the same architecture used by SecMI~\cite{duan2023diffusion},
PIA~\cite{kong2024efficient}, and SimA~\citep{rao2026scorebased} in their respective evaluations. This model
is \emph{unconditional}: it predicts noise given only a noised image and
timestep, at $32{\times}32$ resolution ($d\approx3{,}072$).
 
For ImageNet-1k, we use Guided Diffusion~\cite{DN21}, a
substantially larger, publicly released checkpoint operating at $256{\times}256$ resolution ($d\approx196{,}608$, a $64\times$ increase in
input dimensionality). Unlike our DDPM checkpoints, Guided Diffusion is \emph{class-conditional}: the class label is injected as an additional conditioning signal alongside the timestep embedding, and must be supplied
at every query. Both architectures instantiate the same underlying forward diffusion process and noise-prediction training objective.

\subsection{Results and Discussion}

\begin{table*}[t]
\centering
\caption{Performance of MIA attacks on DDPM across four benchmark datasets. \DIME outperforms all other methods in all metrics at comparable query costs, with the biggest gains in TPR@1\%FPR.}
\label{tab:main_results}
\resizebox{\textwidth}{!}{%
\begin{tabular}{l c ccc ccc ccc ccc}
\toprule
& & \multicolumn{3}{c}{CIFAR-10 (\%)} & \multicolumn{3}{c}{CIFAR-100 (\%)} & \multicolumn{3}{c}{STL10-U (\%)} & \multicolumn{3}{c}{CelebA (\%)} \\
\cmidrule(lr){3-5} \cmidrule(lr){6-8} \cmidrule(lr){9-11} \cmidrule(lr){12-14}
Method & \#Query$\downarrow$ & ASR$\uparrow$ & AUC$\uparrow$ & TPR@1\%FPR$\uparrow$
       & ASR$\uparrow$ & AUC$\uparrow$ & TPR@1\%FPR$\uparrow$
       & ASR$\uparrow$ & AUC$\uparrow$ & TPR@1\%FPR$\uparrow$
       & ASR$\uparrow$ & AUC$\uparrow$ & TPR@1\%FPR$\uparrow$ \\
\midrule
Loss                    & 1     & 77.30 & 84.38 & 6.59 & 75.31 & 82.51 & 9.44 & 93.85 & 97.95 & 52.14 & 67.32 & 73.50 & 5.67 \\
PIA                     & 2     & 81.67 & 88.58 & 14.20 & 79.14 & 86.34 & 15.26 & 95.28 & 98.77 & 70.54 & 72.63 & 80.06 & 9.75 \\
SecMI\textsubscript{stat} & $\sim$12 & 81.45 & 88.51 & 10.45 & 80.85 & 88.05 & 13.34  & 93.98 & 98.20 & 57.76 & 71.08 & 78.03 & 8.64 \\
SimA ($\ell_4$)         & 1     & 83.60 & 90.43 & 14.07 & 82.40 & 89.40 & 13.64 & 95.42 & 98.61 & 62.68 & 71.09 & 78.21 & 8.91 \\
SimA-MC ($\ell_4$, \#mc=10) & 10 & 84.26 & 91.34 & 16.95 & 83.22 & 90.55 & 19.03 & 97.25 & 99.31 & 80.92 & 74.33 & 82.05 & 9.91 \\
SimA-MC ($\ell_4$, \#mc=30) & 30 & 85.15 & 92.00 & 16.14 & 83.74 & 90.99 & 18.87 & 96.74 & 99.23 & 79.84 & 75.03 & 82.96 & 12.91 \\
\midrule
\DIME (Ours) & 1+1 & \textbf{84.18} & \textbf{91.50} & \textbf{30.88}
& \textbf{83.00} & \textbf{90.02} & \textbf{22.79} & \textbf{96.87} & \textbf{99.39} & \textbf{86.16} & \textbf{73.54} & \textbf{81.02} & \textbf{11.63} \\
\DIME (Ours, mc=10), & 10+1 & \textbf{86.43} & \textbf{93.35} & \textbf{47.23} & \textbf{85.01} & \textbf{91.75} & \textbf{26.93} & \textbf{98.65} & \textbf{99.78} & \textbf{97.78} & \textbf{76.67} & \textbf{84.55} & \textbf{14.08} \\
\DIME (Ours, mc=30), & 30+1 & \textbf{87.30} & \textbf{93.89} & \textbf{50.77} & \textbf{85.20} & \textbf{91.92} & \textbf{24.83} & \textbf{98.80} & \textbf{99.80} & \textbf{98.00} & \textbf{78.14} & \textbf{86.31} & \textbf{18.78} \\
\bottomrule
\end{tabular}%
}
\end{table*}


\begin{table}[t]
\centering
\caption{Performance on ImageNet-1k / Guided Diffusion (256x256, class-conditional). \DIME outperforms all other methods in all metrics at comparable query costs, with the biggest gains in TPR@1\%FPR.}
\label{tab:imagenet_results}
\resizebox{\linewidth}{!}{%
\begin{tabular}{l c ccc}
\toprule
Method & \#Query$\downarrow$ & ASR$\uparrow$ & AUC$\uparrow$ & TPR@1\%FPR$\uparrow$ \\
\midrule
Loss                        & 1        & 74.62 & 80.64 & 1.60 \\
PIA                         & 2        & 67.56 & 69.88 & 1.07 \\
SecMI\textsubscript{stat}   & $\sim$12 & 66.67 & 50.59 & 0.93 \\
SimA ($\ell_4$)             & 1        & 83.84 & 86.35 & 7.20 \\
SimA-MC ($\ell_4$, mc=10)   & 10       & 84.13 & 89.41 & 9.33 \\
SimA-MC ($\ell_4$, mc=30) & 30 & 82.58 & 87.45 & 1.87 \\
\midrule
\DIME (Ours)    & 1+1  & \textbf{86.45} & \textbf{92.50} & \textbf{14.27} \\
\DIME (Ours, mc=10) & 10+1 & \textbf{86.96} & \textbf{92.58} & \textbf{15.27} \\
\DIME (Ours, mc=30) & 30+1 & \textbf{87.33} & \textbf{92.65} & \textbf{15.27} \\
\bottomrule
\end{tabular}%
}
\end{table}

\noindent{\textbf{DDPM.}} Table~\ref{tab:main_results} reports \DIME against Loss, PIA, SecMI$_\text{stat}$, SimA, and SimA-MC across four DDPM checkpoints (CIFAR-10, CIFAR-100, CelebA, STL10-U). We compare at matched, or near-matched, query budgets.
 
At every dataset and every matched tier, \DIME outperforms the strongest baseline on all evaluation metrics simultaneously. The gap is most pronounced on TPR@1\%FPR, the metric prior work~\cite{carlini2022membership} has argued is most operationally meaningful for membership inference: at 11 queries, \DIME achieves 47.23\% TPR@1\%FPR on CIFAR-10, compared to SimA-MC's best result of 16.14\% at 30 queries, displaying roughly $3\times$ the detection rate at a third of the query cost. The same pattern holds on CelebA (14.08\% at 11 queries vs. 12.91\% at the highest tier for SimA) and is most dramatic on STL10-U, where \DIME reaches 97.44\% TPR@1\%FPR at 11 queries against SimA-MC's best 80.92\%, which is near-total detection. On CIFAR-100, \DIME improves TPR@1\%FPR from SimA-MC's 19.03\% to 26.93\%; we note a modest, isolated dip in TPR@1\%FPR at the highest query tier on this dataset not mirrored in AUC or ASR (both of which continue to improve), consistent with the greater sensitivity of tail-probability estimates to sampling noise at this scale, and report the 11-query tier as CIFAR-100's representative operating point.
 
These improvements hold consistently whether comparing against SimA-MC (and other methods) at matched query count or, more conservatively, against its best result at any query count in our evaluation; we observe that \DIME's cheapest tier alone (2 queries) already exceeds every baseline's TPR@1\%FPR on CIFAR-10 and CIFAR-100.

\noindent{\textbf{Guided-Diffusion.}} To test whether these results generalize beyond the 32$\times$32 DDPM
checkpoints above, we evaluate \DIME against Guided
Diffusion~\cite{DN21}, a substantially larger,
256$\times$256, class-conditional model, using a stratified sample of
ImageNet-1k (member) and ImageNetV2~\cite{recht2019imagenet} (held-out).
This setting differs from our DDPM checkpoints not only in resolution and
parameter count, but in architecture (class-conditioning) and input
dimensionality ($d\approx196{,}608$, a 64$\times$ increase over CIFAR-scale
inputs), which is a meaningfully more demanding test of whether our theoretical construction transfers beyond the setting it was developed in.
 
Table~\ref{tab:imagenet_results} reports the resulting comparison. \DIME outperforms every baseline on ASR, AUC, and TPR@1\%FPR simultaneously, at every query budget we evaluate. Even at its cheapest setting (2 queries), \DIME's TPR@1\%FPR (14.27\%) exceeds every baseline's best result at any query cost, including SimA-MC's best 10-query setting (9.33\%), showing a $1.5\times$ improvement at roughly $1/5$th the budget. At a directly comparable cost (11 queries vs. SimA-MC's 10), \DIME wins all three metrics outright: 86.96\% ASR vs. 84.13\%, 92.58\% AUC vs. 89.41\%, and 15.27\%
TPR@1\%FPR vs. 9.33\%.
 
We note two additional observations specific to this setting. First, \DIME's TPR@1\%FPR plateaus between 11 and 31 queries (15.27\% at both), while ASR and AUC continue to improve marginally (86.96\%$\to$87.33\%,
92.58\%$\to$92.65\%), which is unlike the clean, monotonic gains from additional queries observed on our DDPM checkpoints, suggesting the marginal value of
additional averaging diminishes faster at this scale, plausibly reflecting the smaller evaluation set available at ImageNet's higher per-query cost.
Second, SimA-MC itself is non-monotonic in query count here: its TPR@1\%FPR
at mc=30 (1.87\%) is substantially \emph{lower} than at mc=10 (9.33\%), indicating this instability is not specific to our method but reflects a broader difficulty in reliably estimating tail-probability statistics at this resolution.
 
Taken together, these results indicate that the theoretical grounding motivating \DIME, derived from the exact optimal-denoiser solution for a
finite training set (Lemma~\ref{lem:optimal-denoiser}), yields consistent practical improvements across a substantial architectural and scale gap, not only on the smaller checkpoints used in our primary evaluation. Illustratively, Figure~\ref{fig:split-distributions} shows the clear statistical separation between member and held-out data points in the \DIME statistic across all datasets and trained checkpoints.

Although we report the best calibrated timestep $t$ in our main tables, we plot the robustness of \DIME to varying timesteps in Figure~\ref{fig:stability}. Lemma~\ref{lem:convergence-denoiser} prescribes a stronger membership signal for earlier timesteps in the idealized denoiser, which is reflected strongly in the Guided Diffusion ImageNet experiment, but not in the DDPM experiments on the other datasets. We explain this discrepancy simply. While our DDPM model checkpoints use $\sim 35$M parameters, the Guided Diffusion checkpoint substantially expands with $\sim 550$M parameters, suggesting that the heavier architecture with its additional expressivity conforms better to our theoretical object in the idealized denoiser.

Our experiments demonstrate that our optimal denoiser-based membership inference attacks outperform previously considered methods across a variety of dataset fidelities, model architectures, and evaluation metrics.

\begin{figure*}[t]
    \centering
    \includegraphics[width=\textwidth]{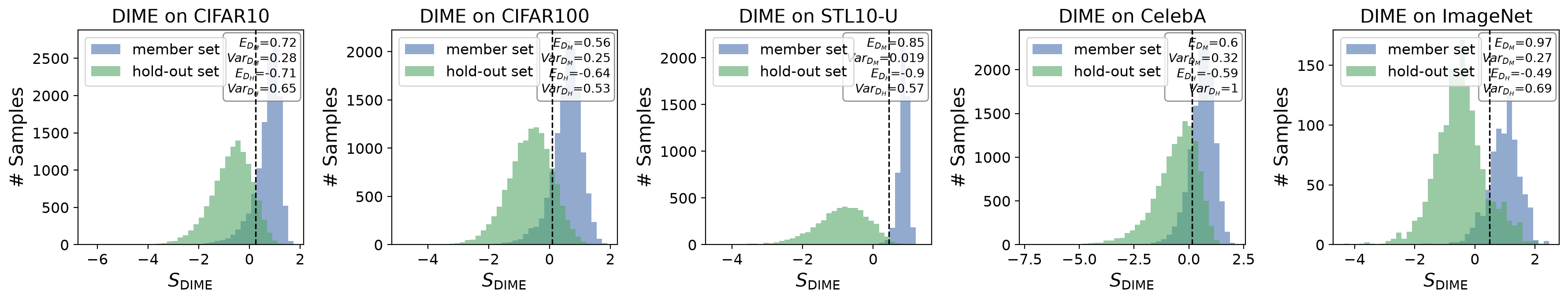}
    \caption{\DIME test statistic distributions for samples from member and held-out sets. The vertical dashed line denotes the selected threshold $\tau$ for each figure.}
    \label{fig:split-distributions}
\end{figure*}

\begin{figure*}[t]
    \centering
    \includegraphics[width=\textwidth]{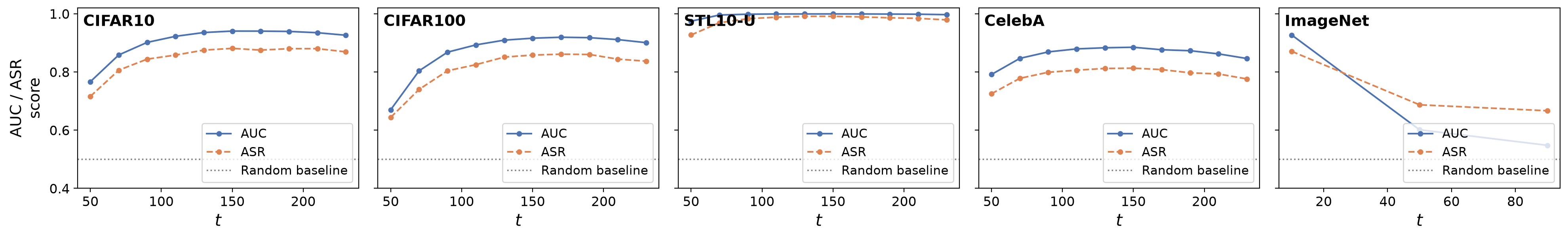}
    \caption{Robustness of the \DIME test to timestep across five datasets. DDPM models remain stable across all time-steps, while the Guided Diffusion model is most vulnerable to membership inference at early timesteps.}
    \label{fig:stability}
\end{figure*}

\section{Defenses}
\label{sec:defense}


\begin{table*}[tb]
\centering
\caption{Attack accuracy under DP-SGD defense (CelebA, $n{=}1000$, identity-aware
split, 500 epochs). All five attack methods' accuracy collapses under DP-SGD,
regardless of $\epsilon$. `Vanilla' means without DP-SGD. \DIME's \#Query is
$q_{\text{ref}}{+}1$ (11 at $q_{\text{ref}}{=}10$, the reported tier, roughly
comparable to SimA-MC's 10). The highest accuracy in each column is bolded.}
\label{tab:dpsgd_results_n1000}
\resizebox{\textwidth}{!}{%
\begin{tabular}{l ccc ccc ccc ccc ccc}
\toprule
& \multicolumn{3}{c}{Loss} & \multicolumn{3}{c}{PIA} & \multicolumn{3}{c}{SecMI$_{\text{stat}}$}
& \multicolumn{3}{c}{SimA ($\ell_4$)} & \multicolumn{3}{c}{\DIME (Ours)} \\
\cmidrule(lr){2-4} \cmidrule(lr){5-7} \cmidrule(lr){8-10} \cmidrule(lr){11-13} \cmidrule(lr){14-16}
& ASR$\uparrow$ & AUC$\uparrow$ & T@1\%F$\uparrow$
& ASR$\uparrow$ & AUC$\uparrow$ & T@1\%F$\uparrow$
& ASR$\uparrow$ & AUC$\uparrow$ & T@1\%F$\uparrow$
& ASR$\uparrow$ & AUC$\uparrow$ & T@1\%F$\uparrow$
& ASR$\uparrow$ & AUC$\uparrow$ & T@1\%F$\uparrow$ \\
\midrule
$\epsilon=1$  & 0.530 & 0.521 & 1.00 & 0.526 & 0.516 & 1.60 & 0.517 & 0.500 & 1.80 & 0.517 & 0.505 & 0.40 & 0.529 & 0.517 & 0.40 \\
$\epsilon=4$  & 0.539 & 0.518 & 1.40 & 0.506 & 0.481 & 0.60 & 0.522 & 0.488 & 1.20 & 0.515 & 0.502 & 0.20 & 0.525 & 0.515 & 1.20 \\
$\epsilon=10$ & 0.519 & 0.509 & 1.40 & 0.506 & 0.483 & 0.80 & 0.523 & 0.519 & 1.00 & 0.510 & 0.489 & 1.00 & 0.537 & 0.529 & 0.80 \\
Vanilla       & \textbf{0.690} & \textbf{0.754} & \textbf{10.20} & \textbf{0.753} & \textbf{0.829} & \textbf{15.40} & \textbf{0.710} & \textbf{0.777} & \textbf{13.60} & \textbf{0.745} & \textbf{0.827} & \textbf{16.20} & \textbf{0.798} & \textbf{0.869} & \textbf{22.40} \\
\bottomrule
\end{tabular}%
}
\end{table*}

Differential privacy (DP)~\cite{DworkMNS06,DworkKMMN06} is the standard formal defense against membership inference: Differentially Private Stochastic Gradient Descent (DP-SGD)~\cite{abadi2016deep} is a standard machine learning algorithm that implements differential privacy. 
It bounds the influence that any single training example can have on the final model, giving a provable guarantee independent of the specific attack used against it. Prior work evaluating diffusion-model memorization under DP-SGD finds that existing attacks are rendered ineffective at standard privacy budgets on small-scale fine-tuning datasets~\citep{pang2023black}. We evaluate whether this holds against \DIME as well: a defense that only defeats weaker, prior attacks is a substantially less interesting result than one that also defeats a stronger, theoretically-motivated one.

\subsection{Differential Privacy: Definition and Implications}

One of the most ubiquitous definitions of privacy due to its rigorous guarantees, differential privacy, referred to as DP~\citep{dwork2006differential,DworkKMMN06}, quantifies the worst-case privacy risk admitted by randomized algorithms.

\begin{definition}[Differential Privacy]
    A (randomized) mechanism $\mathcal{M}$ satisfies $(\eps,\delta)$-differential privacy if for all neighboring datasets $z,z'$, i.e. differing in at most one row, it holds that for all events $S$,
    $\Pr[\mathcal{M}(z)\in S]\leq e^\eps\Pr[\mathcal{M}(z')\in S]+\delta$.
\end{definition}

To illustrate the practical privacy effects of such a theoretical guarantee, we present a simple utility bound for any adversary under a differentially private training algorithm.

\begin{lemma}
    For fixed FPR $\alpha$, if training mechanism $\mathcal{M}$ satisfies $(\varepsilon,\delta)$-DP, then the TPR is bounded above by $e^\varepsilon\alpha+\delta$.
\end{lemma}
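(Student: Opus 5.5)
We prove the bound by post-processing: the attack's final decision is a randomized function of the trained model, so the DP guarantee for the training mechanism $\mathcal{M}$ transfers directly to that decision. First we fix the challenge point $x^*$ and couple the two worlds of the security game (Definition~\ref{def:mia-game-def}) as neighboring datasets. In the $\mathsf{In}$ world the model is trained on a dataset $z$ containing $x^*$. In the $\mathsf{Out}$ world it is trained on a dataset $z'$ that differs from $z$ only in the row holding $x^*$, for instance with that row replaced by a fresh draw from $\mathbb{D}$.

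Next we view the whole adversary as post-processing of the released model. The $q$ gray-box queries, the calibration data drawn from $\mathbb{D}$, the threshold $\tau$ and the adversary's own coins are all independent of which world is realized except through $f(\cdot;\theta)\sim\mathcal{M}(\cdot)$. Hence the output bit $\hat b=\mathcal{A}^{\mathbb{D},f}(x^*)$ is a randomized function of $\mathcal{M}$'s output. We fix the adversary's internal randomness $\rho$ and let
\[
S_\rho=\{\theta : \mathcal{A}_\rho^{\mathbb{D},f_\theta}(x^*)=1\}.
\]
Applying the DP inequality to the event $S_\rho$ gives
\[
\Pr[\mathcal{M}(z)\in S_\rho]\le e^\eps \Pr[\mathcal{M}(z')\in S_\rho]+\delta .
\]
Averaging over $\rho$ then yields
\[
\mathrm{TPR}=\Pr[\hat b=1\mid H_1]\le e^\eps\,\Pr[\hat b=1\mid H_0]+\delta=e^\eps\alpha+\delta .
\]

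The remaining step is to handle the randomness in how the datasets are drawn. The game samples $\mathcal{D}\sim\mathbb{D}^*$ and the challenge point randomly, so TPR and FPR are averages over these draws. We condition on the draws so that each realization gives a neighboring pair $(z,z')$, apply the pointwise bound there, and then average. The map $p\mapsto e^\eps p+\delta$ is affine, so the bound survives averaging: we get
\[
\mathbb{E}[\mathrm{TPR}]\le e^\eps\,\mathbb{E}[\mathrm{FPR}]+\delta = e^\eps\alpha+\delta .
\]

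The main obstacle is making this coupling legitimate, because in the game as literally stated the member and non-member worlds do not differ in exactly one row of a common dataset. We would address this by using the standard ``swap one record'' formulation, resampling the target row, and noting that replacement neighbors are covered by the DP definition ("differing in at most one row"). If only add/remove neighbors are available, the guarantee would instead apply via group privacy with parameters $(2\eps,(1+e^\eps)\delta)$. For the statement as written, we adopt the replacement convention.
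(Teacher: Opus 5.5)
Your proof is correct and takes the same route as the paper: both treat TPR and FPR as the probability that the post-processed attack outputs $1$ on two neighboring training sets, then apply the $(\varepsilon,\delta)$-DP inequality to that event. The paper does this in one line, pairing $z=\mathcal{D}_{\mathrm{train}}$ with the add/remove neighbor $z'=\mathcal{D}_{\mathrm{train}}\cup\{x\}$; you instead use the replacement convention, which matches the fixed-size security game, and you make the post-processing step and the averaging over dataset draws explicit.
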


\begin{proof}
    Denoting arbitrary $z:=\mathcal{D}_{\mathrm{train}}$ and $z':=\mathcal{D}_{\mathrm{train}}\cup\{x\}$ adjacent, we can rewrite 
    $\text{FPR}=\Pr[\hat b=1|H_0]=\Pr[\mathcal{A}(\mathcal{M}(z))=1],$ 
    $\text{TPR}=\Pr[\hat b=1|H_1]=\Pr[\mathcal{A}(\mathcal{M}(z'))=1],$ 
    giving us that
    $\text{TPR}\leq e^\varepsilon \text{FPR}+\delta.$
\end{proof}

\begin{remark}
    The TPR@1\%FPR metric is bounded above by $0.01e^\varepsilon+\delta$.
\end{remark}

\subsection{Experimental setup}

We train a CelebA checkpoint, separately to our main results, with DP-SGD (via
Opacus~\cite{yousefpour2021opacus}) at three privacy budgets, $\epsilon\in\{1,4,10\}$ and $\delta=10^{-5}$, alongside a non-private (`Vanilla') baseline trained identically but without DP-SGD. All four checkpoints share the same architecture, training recipe, and 500-epoch budget, differing only in the presence and strength of
DP-SGD, with $n{=}1000$ member images. We use an identity-aware member/held-out
split: no identity's photos appear on both
sides, preventing the defense evaluation from being confounded by identity-level correlation that DP-SGD's per-example guarantee does not address.

\subsection{Results and Discussion}

Table~\ref{tab:dpsgd_results_n1000} reports ASR, AUC, and TPR@1\%FPR for all
five methods against all four checkpoints. The Vanilla checkpoint shows substantial, consistent memorization across every method: AUC ranges from 77.7\% (SecMI$_\text{stat}$) to 86.9\% (\DIME), with \DIME achieving the strongest result on every metric (79.8\% ASR, 86.9\% AUC, 22.4\%TPR@1\%FPR), which confirms this checkpoint is a genuine, non-trivial target
before any defense is applied.

Under DP-SGD, every method collapses to near-chance  performance at every tested $\epsilon$, including \DIME. AUC falls into a narrow 48.1-52.9\% band
across all fifteen (method, $\epsilon$) combinations, and TPR@1\%FPR drops from Vanilla's 10.2-22.4\% range to at most 1.8\% under any DP-SGD setting. Notably, we do not observe a consistent monotonic trend across
$\epsilon=1,4,10$ for any method; performance at $\epsilon=1$ is not
reliably lower than at $\epsilon=10$, which is explained with all three privacy budgets already suppressing memorization signal below what our evaluation scale can reliably distinguish, rather than $\epsilon$ producing a graded effect in this regime.
These results indicate that DP-SGD remains an effective, complete defense against membership inference on diffusion models even against an attack
with substantially higher power in the non-private setting: \DIME's 8-15
percentage point AUC advantage over the strongest baseline on the Vanilla setting (Table~\ref{tab:main_results}) does not survive contact with DP-SGD at any tested privacy budget.

\section{Conclusion}
\label{sec:conclusion}

We derived an exact, closed-form characterization of the idealized denoiser; the MSE-optimal noise-prediction function a diffusion model would compute at
any query point, given a finite training set, showing it takes the form of a responsibility-weighted average over training examples, with weights determined by a softmax over distance to the query. This
posterior responsibility structure is the exact mechanism by which a finite training set leaves
a detectable trace in the model's output. When responsibility concentrates on a single training point, the model's implicit reconstruction reveals that point's identity with near-certainty; when it
is divided among several, the reconstruction instead reflects 
an aggregate that need not correspond to any single training example. Framing membership inference through this lens replaces an empirically-motivated
search for informative statistics with a question that has a derivable answer: what does the training set's structure make observable at a
given query.

That structure decomposes,
exactly, into two complementary, independently estimable sources of membership signal: how far a candidate's model-implied reconstruction deviates from the candidate itself, and how densely the training points responsible for that reconstruction are crowded around one another. The first channel is what prior diffusion membership inference work measures; the second has no analog in that literature, materializing from our theoretical analysis of the idealized denoiser. We showed both channels admit query-efficient estimators computable from forward evaluations of the trained network alone, sharing the same perturbed queries and requiring no gradient access, shadow models, or generative sampling. We complete the story by developing \DIME, an efficient and motivated membership error derived attack.

Across five diffusion checkpoints spanning a large range in input dimensionality and containing both unconditional and class-conditional architectures, \DIME outperforms
every baseline attack we compare against on ASR, AUC, and TPR@1\%FPR simultaneously, at matched or substantially lower query cost. We observe
cases where our cheapest setting exceeds existing baselines' own most query expensive setting. We additionally evaluated \DIME against DP-SGD, the standard formal defense against privacy adversaries, and found that our attack, like the tested prior baselines, is reduced to near-chance performance at every tested privacy budget. This is itself an informative result: DP-SGD's guarantee holds even against an attack with substantially higher power in the undefended setting.

\paragraph{Limitations and future work.} Our theoretical characterization
concerns the \emph{idealized} denoiser; a trained neural network approximates this object rather than realizing it exactly, and while our empirical results suggest this approximation is close enough to be exploitable,
formalizing the gap
remains an open problem.
Furthermore, the bias and crowding decomposition has a natural coding-theoretic interpretation: the training set acts as a Euclidean codebook and the diffusion denoiser as a soft decoder.
The bias term measures displacement from the decoded codeword, while the crowding term measures ambiguity among nearby plausible codewords. 
Exploring this connection, and its relation to privacy, is important future work~\citep{Alabi26}.

\cleardoublepage
\bibliographystyle{plainurl}
\bibliography{main}

\cleardoublepage
\appendix
\section*{Ethical Considerations}

We describe a potential privacy adversary that could be deployed by malicious actors, although the attack can also be developed benevolently for privacy auditing. We believe the immediate risk of our own work is outweighed by the benefits from increasing public understanding of diffusion model behavior and privacy vulnerabilities. We also present a feasible and effective defense against our attack to counteract potential harms incurred.

\section*{Open Science}

All code and instructions for recreating all results and figures can be found at the anonymous repository linked: \url{https://anonymous.4open.science/r/dime-mia-E1C5/}.

\section{Additional Theoretical Proofs and Details}
\label{sec:appendix-theory}

\begin{lemma}
    Consider a sequence of functions 
    $$r_i^{(j)}(g)=\frac{s_i^{(j)}(g)}{\sum_k s_k^{(j)}(g)},$$ where $s_i^{(j)}(g)=\exp\left(-\frac{1}{2(\sigma^{(j)})^2}\Vert g-a^{(j)}x_i\Vert^2\right)$, $a^{(j)}=\sqrt{1-\sigma^{(j)2}}$, and $\sigma^{(j)}\rightarrow 0^+$. Then
    $$r_i^{(j)}(g)\longrightarrow\mathbf{1}_{V_i}(g)\quad\text{a.e.}$$
    \label{lem:convergence-denoiser-appendix}
\end{lemma}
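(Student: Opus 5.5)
The plan is a pointwise comparison of the unnormalized weights $s_k^{(j)}(g)$ against $s_i^{(j)}(g)$, followed by discarding a Lebesgue-null set of tie points. I will assume the training points $x_1,\dots,x_n$ are distinct; this is needed, since if $x_i=x_k$ then $r_i^{(j)}=r_k^{(j)}$ and both can converge to at most $1/2$.

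\textbf{Step 1: rewrite the responsibility.} Fix $g$ and write
$$r_i^{(j)}(g)=\Bigl(1+\sum_{k\neq i}\exp\bigl(-E_{ik}^{(j)}(g)/(2(\sigma^{(j)})^2)\bigr)\Bigr)^{-1},$$
where $E_{ik}^{(j)}(g)=\Vert g-a^{(j)}x_k\Vert^2-\Vert g-a^{(j)}x_i\Vert^2$. Expanding the squares gives
$$E_{ik}^{(j)}(g)=2a^{(j)}g^\top(x_i-x_k)+(a^{(j)})^2\bigl(\Vert x_k\Vert^2-\Vert x_i\Vert^2\bigr).$$
Set $\Delta_{ik}(g)=\Vert g-x_k\Vert^2-\Vert g-x_i\Vert^2$, which is the same expression with $a=1$. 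Then
$$E_{ik}^{(j)}(g)=\Delta_{ik}(g)+R_{ik}^{(j)}(g),$$
where $|R_{ik}^{(j)}(g)|\le C(g)\,(1-a^{(j)})$ and $C(g)$ depends only on $g$ and the data.

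\textbf{Step 2: the scale mismatch.} This is the only point needing care. The factor $a^{(j)}$ shifts the comparison points to $a^{(j)}x_i$, so the relevant quantity is
$$\frac{E_{ik}^{(j)}}{2\sigma^2}=\frac{\Delta_{ik}}{2\sigma^2}+O\!\left(\frac{1-a}{\sigma^2}\right).$$
Since $1-a=1-\sqrt{1-\sigma^2}\le\sigma^2$, the correction term is $O(1)$, bounded uniformly in $j$ for fixed $g$. Consequently, whenever $\Delta_{ik}(g)\neq0$, the exponent $E_{ik}^{(j)}/(2\sigma^2)$ tends to $+\infty$ or $-\infty$ according to the sign of $\Delta_{ik}(g)$.

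\textbf{Step 3: case analysis off the tie set.} Let $N=\bigcup_{i\neq k}\{g:\Delta_{ik}(g)=0\}$. Each set $\{g:\Delta_{ik}(g)=0\}$ is the perpendicular-bisector hyperplane $\{g: 2g^\top(x_k-x_i)=\Vert x_k\Vert^2-\Vert x_i\Vert^2\}$, which has Lebesgue measure zero because $x_i\neq x_k$. Hence $N$ is null. For $g\notin N$ there are two cases.
\begin{itemize}
\item If $g\in V_i$, then $\Delta_{ik}(g)>0$ for all $k\neq i$, because ties are excluded. So every term in the sum tends to $0$ and $r_i^{(j)}(g)\to1=\mathbf{1}_{V_i}(g)$.
\item If $g\notin V_i$, some $k$ has $\Delta_{ik}(g)<0$. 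Then $s_k^{(j)}/s_i^{(j)}\to\infty$, so $r_i^{(j)}(g)\le s_i^{(j)}/s_k^{(j)}\to0=\mathbf{1}_{V_i}(g)$.
\end{itemize}
This gives convergence everywhere outside $N$, hence almost everywhere.

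On the tie hyperplanes, including the boundaries of $V_i$, the limit can be a fractional value or need not exist, because the bounded $O(1)$ correction from Step 2 stays in play. This is exactly why the statement only claims convergence a.e.
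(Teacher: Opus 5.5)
Your proof is correct and follows essentially the same route as the paper's: both rewrite $r_i^{(j)}=\bigl(1+\sum_{k\neq i}s_k^{(j)}/s_i^{(j)}\bigr)^{-1}$, show the $a^{(j)}$-scaling adds only a bounded term to the exponent (you via $1-a\le\sigma^2$, the paper via $a(1-a)/(2\sigma^2)\to 1/4$), and discard the null union of bisector hyperplanes. Your explicit distinctness assumption is one the paper's null-set claim silently needs. Your closing remark is slightly too cautious, since the bounded correction actually converges and the limit therefore exists at tie points too (it is just fractional there), though this does not affect the a.e.\ statement.
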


\begin{proof}
    We have that
    \begin{align*}
        r_i^{(j)}(g)=\frac{s_i^{(j)}(g)}{\sum_k s_k^{(j)}(g)}=\frac{1}{1+\sum_{k\neq i}\frac{s_k^{(j)}(g)}{s_i^{(j)}(g)}}.
    \end{align*}
    We want to show that
    $$\frac{s_k^{(j)}(g)}{s_i^{(j)}(g)}\longrightarrow\begin{cases}
        0: & d_i(g)< d_k(g) \\
        \infty: & d_i(g) > d_k(g)
    \end{cases}.$$
    Suppose the above identity holds, then considering any $g$ such that $d_i(g)\neq d_k(g)$ for any $i\neq k$, then letting $i^*$ be the unique nearest index, i.e. $d_{i^*}(g):=\min_i d_i(g)$, we have two cases:
    \begin{itemize}
        \item If $i=i^*,$ then every $k\neq i$ has $d_i(g)<d_k(g)$, so
        $$r_i^{(j)}(g)=\frac{1}{1+\sum_{k\neq i}\frac{s_k^{(j)}(g)}{s_i^{(j)}(g)}}\rightarrow\frac{1}{1+0}=1.$$
        \item If $i\neq i^*,$ then $d_{i^*}(g)<d_i(g)$, so (observing all ratios are positive)
        $$r_i^{(j)}(g)=\frac{1}{1+\sum_{k\neq i}\frac{s_k^{(j)}(g)}{s_i^{(j)}(g)}}\leq\frac{1}{1+\frac{s_{i^*}^{(j)}(g)}{s_i^{(j)}(g)}}\rightarrow 0.$$
    \end{itemize}
    So outside of the zero-measure set 
    $$\{g:\exists\, i\neq k\text{ s.t. }d_i(g)=d_k(g)\},$$ it holds that $r_i^{(j)}(g)\rightarrow \mathbf{1}_{V_i}$. It remains to show the aforementioned identity. For notational convenience, let $\sigma=\sigma^{(j)}$, letting $a:=\sqrt{1-\sigma^2}$ and $s_i(g)=\exp(-\frac{1}{2\sigma^2}\Vert g-ax_i\Vert^2)$. We have
    $$\frac{s_k(g)}{s_i(g)}=\exp\left(-\frac{1}{2\sigma^2}\Vert g-ax_k\Vert^2+\frac{1}{2\sigma^2}\Vert g-ax_i\Vert^2\right)$$
    $$=\exp\left(\frac{a}{\sigma^2}\langle g,x_k-x_i\rangle-\frac{a^2}{2\sigma^2}(\Vert x_k\Vert^2-\Vert x_i\Vert^2)\right)$$
    \begin{align*}
        = \exp\Bigl( & \frac{a}{\sigma^2}\Bigl[\langle g,x_k-x_i\rangle-\frac{1}{2}(\Vert x_k\Vert^2-\Vert x_i\Vert^2) \\
        &+\frac{1-a}{2}(\Vert x_k\Vert^2-\Vert x_i\Vert^2)\Bigr]\Bigr)
    \end{align*}
    $$=\exp\left(\frac{a}{2\sigma^2}(d_i(g)^2-d_k(g)^2)+\frac{a(1-a)}{2\sigma^2}(\Vert x_k\Vert^2-\Vert x_i\Vert^2)\right).$$
    We can write
    $$\frac{a}{\sigma^2}=\frac{\sqrt{1-\sigma^2}}{\sigma^2}\rightarrow \infty\quad\text{as }\sigma\rightarrow 0^+.$$
    On the other hand, it holds that
    $$\frac{a(1-a)}{2\sigma^2}=\frac{a(1-a)}{2(1-a^2)}=\frac{a}{2(1+a)}\rightarrow\frac{1}{4}\;\;\text{as }\sigma\rightarrow 0^+.$$
    So within the exponential, we have one term that blows up and one term that becomes irrelevant asymptotically. Subsequently, the convergence behavior depends on the sign of $d_i(g)^2-d_k(g)^2$, i.e. whether $g$ is closer to $x_i$ or $x_k$, and we conclude as desired that
    $$\frac{s_k(g)}{s_i(g)}\longrightarrow\begin{cases}
        0: & d_i(g)< d_k(g) \\
        \infty: & d_i(g) > d_k(g)
    \end{cases}\quad\text{as }\sigma\rightarrow 0^+.$$
\end{proof}

\begin{corollary}[Ideal denoiser as a scaled bias vector]
    For any query point $g$ and timestep $t$, we can write
    \begin{equation}
        \hat\varepsilon^*(g,t)=\frac{1}{\sigma_t}(g-\mathbb{E}[Y]),
    \end{equation}
    where $Y=\sqrt{\overline\alpha_t}I\sim r(g)$ is the (scaled) training point random variable denoted in~\eqref{eq:bias_variance}.
    \label{cor:error-decomposition}
\end{corollary}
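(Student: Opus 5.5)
The plan is to read the corollary directly off the closed form in Lemma~\ref{lem:optimal-denoiser}. The only real content is identifying the weighted sum $\sum_i r_i(g)x_i$, scaled by $\sqrt{\overline\alpha_t}$, with the expectation of the random variable $Y$ defined before Equation~\eqref{eq:estimation_error}.

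First, I recall the setup. The index $I$ has law $\Pr(I=i\mid g)=r_i(g)$, and $Y=\sqrt{\overline\alpha_t}x_I$. The statement's notation ``$Y=\sqrt{\overline\alpha_t}I$'' is shorthand for this; I will read it that way and say so explicitly. The weights $r_i(g)$ are nonnegative and sum to one, since they are a normalized softmax of the strictly positive $s_i(g)$. So $I$ is a well-defined distribution on the finite set $\{1,\dots,n\}$, and $Y$ is a discrete random vector in $\mathbb{R}^d$ with finitely many atoms. Its expectation therefore exists, with no integrability issues.

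Second, I compute the expectation. Linearity over the finitely many atoms gives
\[
\mathbb{E}[Y]=\sum_{i=1}^n \Pr(I=i\mid g)\,\sqrt{\overline\alpha_t}x_i=\sqrt{\overline\alpha_t}\sum_{i=1}^n r_i(g)x_i.
\]
Third, I substitute this into Lemma~\ref{lem:optimal-denoiser}. That lemma gives
\[
\hat\varepsilon^*(g,t)=\frac{g}{\sigma_t}-\frac{1}{\sigma_t}\Bigl(\sqrt{\overline\alpha_t}\sum_{i} r_i(g)x_i\Bigr),
\]
and the bracketed term is exactly $\mathbb{E}[Y]$. Factoring out $1/\sigma_t$ yields $\hat\varepsilon^*(g,t)=\frac{1}{\sigma_t}(g-\mathbb{E}[Y])$ for every $g$ and every $t$ with $\sigma_t>0$. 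The condition $\sigma_t>0$ holds because each $\beta_t\in(0,1)$ forces $\overline\alpha_t<1$.

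There is no substantive obstacle here. The one point needing care is consistency of notation: the expectation is taken with respect to the responsibility distribution evaluated at the same query point $g$ at which the denoiser is evaluated. I would state this conditioning explicitly, so that the later use in Equation~\eqref{eq:M_decomposition} is justified. That use takes $g=\sqrt{\overline\alpha_t}x^*$ and gives $\|\mathbb{E}[Y]-\sqrt{\overline\alpha_t}x^*\|^2=\sigma_t^2\|\hat\varepsilon^*(g,t)\|^2$.
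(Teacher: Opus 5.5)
Your proposal is correct and matches the paper's proof: compute $\mathbb{E}[Y]=\sqrt{\overline\alpha_t}\sum_i r_i(g)x_i$, substitute it into the closed form of Lemma~\ref{lem:optimal-denoiser}, and factor out $1/\sigma_t$. Your two added remarks are sensible clarifications the paper leaves implicit: reading $Y$ as $\sqrt{\overline\alpha_t}x_I$, and noting that $\sigma_t>0$.
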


\begin{proof}
    By definition, $\mathbb{E}[Y] = \sqrt{\overline\alpha_t}\sum_i r_i(g)x_i$.
    Substituting into Lemma~\ref{lem:optimal-denoiser},
    \[
    \hat\epsilon^*(g,t) = \frac{g}{\sigma_t} - \frac{\sqrt{\overline\alpha_t}}{\sigma_t}\sum_i r_i(g)x_i
    = \frac{g}{\sigma_t} - \frac{\mathbb{E}[Y]}{\sigma_t} = \frac{1}{\sigma_t}\big(g-\mathbb{E}[Y]\big).
    \]
\end{proof}

\begin{lemma}[Error of the squared-bias estimator]
\label{lem:squared-bias-estimator}
Fix a timestep $t$ and a query point $g$, and write
\[
    f(x) := \hat{\epsilon}^{\star}(x,t),
    \qquad
    B := \|f(g)\|_2.
\]
Let $z_1,\ldots,z_q \stackrel{\mathrm{i.i.d.}}{\sim}
\operatorname{Rad}(\{-1,+1\}^d)$, and define
\[
    \hat B
    :=
    \frac{1}{q}\sum_{k=1}^q
    \left\|
        f(g+\gamma\sigma_t z_k)
    \right\|_2.
\]

Suppose that, throughout the perturbation neighborhood
\[
    \mathcal N_\gamma(g)
    :=
    \left\{
        g+s\gamma\sigma_t z:
        s\in[0,1],\;
        z\in\{-1,+1\}^d
    \right\},
\]
there exist constants $L_t,H_t<\infty$ such that
\[
    \|f(g+h)-f(g)\|_2
    \leq L_t\|h\|_2
\]
and
\[
    \|f(g+h)-f(g)-J h\|_2
    \leq \frac{H_t}{2}\|h\|_2^2,
    \qquad
    J:=\nabla_g f(g).
\]
Then
\[
    -B H_t\gamma^2\sigma_t^2 d
    \leq
    \mathbb E[\hat B^2]-B^2
    \leq
    \bigl(BH_t+L_t^2\bigr)
    \gamma^2\sigma_t^2 d.
\]
Consequently,
\[
    \left|
        \mathbb E[\hat B^2]-B^2
    \right|
    \leq
    \bigl(BH_t+L_t^2\bigr)
    \gamma^2\sigma_t^2 d,
\]
and hence, for fixed $t$, $g$, and $d$,
\[
    \mathbb E[\hat B^2]
    =
    B^2+O(\gamma^2).
\]
\end{lemma}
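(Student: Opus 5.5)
The plan is to compare $\hat B^2$ with the average of the squared norms $\|f(g+h_k)\|_2^2$ for the upper bound, and with $(\mathbb E\|f(g+h_k)\|_2)^2$ for the lower bound. In both directions, linearizing $f$ around $g$ and using that Rademacher vectors have mean zero removes the first-order term.

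\textbf{Setup.} Write $h_k := \gamma\sigma_t z_k$ and $u_k := f(g+h_k)$. Since $z_k\in\{-1,+1\}^d$, every $h_k$ lies in $\mathcal N_\gamma(g)$ and has $\|h_k\|_2^2 = \gamma^2\sigma_t^2 d =: c$ deterministically. Decompose
\[
u_k = f(g) + J h_k + R_k, \qquad \|R_k\|_2\le \tfrac{H_t}{2}c .
\]
Because $\mathbb E[z_k]=0$, we get $\mathbb E[u_k - f(g)] = \mathbb E[R_k]$. Hence, by Cauchy--Schwarz,
\[
\bigl|\langle f(g), \mathbb E[u_k]-f(g)\rangle\bigr| \le B\,\tfrac{H_t}{2}c .
\]
This is the key step: the Jacobian term drops out in expectation, leaving only the second-order remainder.

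\textbf{Upper bound.} By Cauchy--Schwarz (equivalently, Jensen for the empirical mean), $\hat B^2 \le \frac1q\sum_k \|u_k\|_2^2$. Expanding each term gives
\[
\|u_k\|_2^2 = B^2 + 2\langle f(g), u_k-f(g)\rangle + \|u_k-f(g)\|_2^2 .
\]
Taking expectations, the middle term is at most $BH_t c$ by the setup bound. The last term is at most $L_t^2 c$ by the Lipschitz hypothesis. Together these give $\mathbb E[\hat B^2]-B^2 \le (BH_t+L_t^2)c$.

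\textbf{Lower bound.} By Jensen, $\mathbb E[\hat B^2]\ge (\mathbb E\hat B)^2 = (\mathbb E\|u_1\|_2)^2$, using that the $u_k$ are identically distributed. If $B=0$ the claimed bound is trivial. Otherwise $\|u_1\|_2 \ge \langle f(g),u_1\rangle/B$, so
\[
\mathbb E\|u_1\|_2 \ge \frac{B^2 + \langle f(g),\mathbb E[R_1]\rangle}{B} \ge B - \tfrac{H_t}{2}c .
\]
There are two cases.
\begin{itemize}
\item If $B-\tfrac{H_t}{2}c\ge 0$, squaring preserves the inequality and gives $\mathbb E[\hat B^2]\ge B^2 - BH_t c + \tfrac{H_t^2}{4}c^2 \ge B^2 - BH_t c$.
\item If $B<\tfrac{H_t}{2}c$, then $B^2 - BH_t c < 0 \le \mathbb E[\hat B^2]$, so the bound holds trivially.
\end{itemize}

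\textbf{Conclusion and main obstacle.} The absolute-value bound follows because $BH_t \le BH_t + L_t^2$. For fixed $t$, $g$, $d$, the constant $c=\gamma^2\sigma_t^2 d$ is $O(\gamma^2)$, which gives $\mathbb E[\hat B^2]=B^2+O(\gamma^2)$. The main obstacle is the lower bound: the norm is not linear, so the cancellation of the $Jh_k$ term is not directly available for $\|u_k\|_2$. The projection onto $f(g)/B$ is the device that restores linearity, at the cost of the case split above.
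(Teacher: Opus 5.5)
Your proof is correct and is essentially the paper's argument. The upper bound is identical: QM--AM, expand $\|u_k\|_2^2$, note that the $Jh_k$ term vanishes in expectation, then bound the remainder via $H_t$ and the squared increment via $L_t$. The lower bound also starts from $\mathbb{E}[\hat B^2]\ge(\mathbb{E}\|u_1\|_2)^2$, but where you project onto $f(g)/B$, the paper uses Jensen for the norm, $\mathbb{E}\|u_1\|_2\ge\|\mathbb{E}u_1\|_2=\|f(g)+\mathbb{E}R_1\|_2$. Because that bound is automatically nonnegative, squaring and expanding gives $B^2-2B\|\mathbb{E}R_1\|_2$ directly, so your case split (including the $B=0$ case) is unnecessary.
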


\begin{proof}
Let
\[
    a:=f(g),
    \qquad
    h:=\gamma\sigma_t z,
    \qquad
    \Delta(z):=f(g+h)-f(g).
\]
Because $z$ is Rademacher,
\[
    \mathbb E[h]=0,
    \qquad
    \|h\|_2^2=\gamma^2\sigma_t^2d.
\]
By the assumed second-order expansion,
\[
    \Delta(z)=Jh+R(h),
    \qquad
    \|R(h)\|_2
    \leq
    \frac{H_t}{2}\|h\|_2^2.
\]
Therefore,
\begin{equation}
    \left\|\mathbb E[\Delta(z)]\right\|_2
    =
    \left\|\mathbb E[R(h)]\right\|_2
    \leq
    \frac{H_t}{2}
    \gamma^2\sigma_t^2d.
\label{eq:expected-delta}
\end{equation}
The Lipschitz assumption also gives
\begin{equation}
    \|\Delta(z)\|_2
    \leq
    L_t\gamma\sigma_t\sqrt d.
\label{eq:delta-bound}
\end{equation}

For one perturbation, define
\[
    X(z):=\|a+\Delta(z)\|_2.
\]
By convexity of the square,
\[
    \hat B^2
    =
    \left(\frac1q\sum_{k=1}^q X(z_k)\right)^2
    \leq
    \frac1q\sum_{k=1}^q X(z_k)^2.
\]
Taking expectations and expanding the squared norm,
\begin{align*}
    \mathbb E[\hat B^2]
    &\leq
    \mathbb E[X(z)^2] \\
    &=
    \mathbb E\|a+\Delta(z)\|_2^2 \\
    &=
    \|a\|_2^2
    +
    2\left\langle
        a,\mathbb E[\Delta(z)]
    \right\rangle
    +
    \mathbb E\|\Delta(z)\|_2^2.
\end{align*}
Using Equations~\ref{eq:expected-delta},~\ref{eq:delta-bound}, and $\|a\|_2=B$,
\begin{equation}
    \mathbb E[\hat B^2]
    \leq
    B^2+
    \bigl(BH_t+L_t^2\bigr)
    \gamma^2\sigma_t^2d.
    \label{eq:expected-b-hat-squared-upper}
\end{equation}

For the opposite direction, Jensen's inequality gives
\[
    \mathbb E[\hat B^2]
    \geq
    \bigl(\mathbb E[\hat B]\bigr)^2.
\]
Moreover,
\begin{align*}
    \mathbb E[\hat B]
    &=
    \mathbb E_z\|a+\Delta(z)\|_2 \\
    &\geq
    \left\|
        a+\mathbb E[\Delta(z)]
    \right\|_2.
\end{align*}
It follows that
\begin{align}
    \mathbb E[\hat B^2]
    &\geq
    \left\|
        a+\mathbb E[\Delta(z)]
    \right\|_2^2 \\
    &=
    B^2
    +
    2\left\langle
        a,\mathbb E[\Delta(z)]
    \right\rangle
    +
    \left\|\mathbb E[\Delta(z)]\right\|_2^2 \\
    &\geq
    B^2
    -
    2B\left\|\mathbb E[\Delta(z)]\right\|_2 \\
    &\geq
    B^2-BH_t\gamma^2\sigma_t^2d.
   \label{eq:expected-b-hat-squared-lower}
\end{align}
Combining \eqref{eq:expected-b-hat-squared-upper} and \eqref{eq:expected-b-hat-squared-lower} proves the result.
\end{proof}

\begin{remark}[Why Lemma~\ref{lem:squared-bias-estimator} concerns $\hat B^2$]
The unsquared estimator $\hat B$ does not, in general,
satisfy $\mathbb E[\hat B]=B+O(\gamma^2)$. The norm is
nondifferentiable at points where
$\hat\epsilon^\star(g,t)=0$, and at such points the bias can
be $\Theta(\gamma)$. For example, in one dimension, if
$f(u)=u/\sigma_t$ and $g=0$, then $B=0$ while
$\hat B=\gamma$ for every Rademacher perturbation.
Nevertheless, $\hat B^2=\gamma^2$, which is consistent with
Lemma~\ref{lem:squared-bias-estimator}. This is also the relevant
quantity because the exact reconstruction error decomposition
contains $B^2$, not $B$.
\end{remark}

\begin{lemma}[Error order of crowding estimator $\hat V$]
    Fix a query point $x^*$, timestep $t$, and let $g=\sqrt{\bar\alpha_t}x^*$.
    Let $I\sim r(g)$,     $Y=\sqrt{\bar\alpha_t}x_I\in\mathbb{R}^d$, and
    $V=\mathrm{tr}(\mathrm{Cov}(Y))$. For step size $\gamma>0$ and i.i.d.\ Rademacher vectors $z_1,\dots,z_q\in\{-1,+1\}^d$, define
    \begin{equation}
        \hat V(x^*,t,\gamma;q) := \frac{1}{q}\sum_{k=1}^q \frac{z_k^\top\big(\hat\epsilon^*(g+\gamma\sigma_t z_k,\,t)-\hat\epsilon^*(g,t)\big)}{\gamma\sigma_t}.
    \label{eq:recap_dhat}
    \end{equation}
    Then
    \begin{equation}
        \mathbb{E}[\hat V] = \frac{d}{\sigma_t} - \frac{1}{\sigma_t^3}\,V + O(\gamma),
    \label{eq:main_claim}
    \end{equation}
    where $d$ is the input dimension.
    \label{lem:V-error-bias}
\end{lemma}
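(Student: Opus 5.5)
The argument has two parts: an exact identity for the Jacobian trace, and a Taylor argument showing the finite-difference Hutchinson estimator recovers that trace up to $O(\gamma)$.

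\textbf{Step 1: Jacobian of the ideal denoiser.} By Corollary~\ref{cor:error-decomposition}, $\hat\epsilon^*(g,t)=\frac{1}{\sigma_t}(g-m(g))$, where $m(g)=\mathbb{E}[Y]=\sum_i r_i(g)\,y_i$ and $y_i:=\sqrt{\bar\alpha_t}x_i$. Since $s_i(g)=\exp(-\|g-y_i\|^2/(2\sigma_t^2))$, we have $\nabla_g\log s_i=-(g-y_i)/\sigma_t^2$. The softmax rule then gives
\[
\nabla_g r_i=r_i\Big(\nabla_g\log s_i-\sum_j r_j\nabla_g\log s_j\Big)=\frac{r_i}{\sigma_t^2}\,(y_i-m).
\]
Hence
\[
\nabla_g m=\sum_i y_i(\nabla_g r_i)^\top=\frac{1}{\sigma_t^2}\sum_i r_i\,y_i(y_i-m)^\top=\frac{1}{\sigma_t^2}\mathrm{Cov}(Y),
\]
where the last equality uses $\sum_i r_i(y_i-m)=0$ to replace $y_i$ by $y_i-m$. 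This is the Tweedie/second-moment identity. It follows that
\[
J:=\nabla_g\hat\epsilon^*(g,t)=\frac{1}{\sigma_t}I_d-\frac{1}{\sigma_t^3}\mathrm{Cov}(Y),
\qquad
\mathrm{tr}(J)=\frac{d}{\sigma_t}-\frac{V}{\sigma_t^3}.
\]

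\textbf{Step 2: the finite-difference Hutchinson estimator.} Write $h=\gamma\sigma_t z$ and expand
\[
\hat\epsilon^*(g+h)-\hat\epsilon^*(g)=Jh+R(h),\qquad \|R(h)\|\le \tfrac{H_t}{2}\|h\|^2.
\]
Each summand of $\hat V$ is then $z^\top J z+z^\top R(h)/(\gamma\sigma_t)$. For Rademacher $z$ we have $\mathbb{E}[zz^\top]=I_d$, so $\mathbb{E}[z^\top J z]=\mathrm{tr}(J)$ exactly (Hutchinson). The remainder term is bounded using Cauchy--Schwarz, $\|z\|=\sqrt d$ and $\|h\|^2=\gamma^2\sigma_t^2 d$:
\[
\Big|\frac{z^\top R(h)}{\gamma\sigma_t}\Big|\le \frac{\sqrt d\,\tfrac{H_t}{2}\gamma^2\sigma_t^2 d}{\gamma\sigma_t}=\tfrac{H_t}{2}\,\gamma\,\sigma_t\, d^{3/2}=O(\gamma)
\]
for fixed $t$ and $d$. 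Averaging over $k=1,\dots,q$ preserves the expectation, and combining with Step 1 gives \eqref{eq:main_claim}.

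\textbf{Main obstacle: justifying the second-order remainder bound.} We need a finite constant $H_t$ on the neighborhood $\mathcal N_\gamma(g)$, as assumed in Lemma~\ref{lem:squared-bias-estimator}. Here this comes for free: $\hat\epsilon^*$ is a finite softmax-weighted combination of smooth functions, so it is $C^\infty$. Its Hessian is bounded on the compact neighborhood (in terms of $\sigma_t$ and $\max_i\|y_i\|$), so the Taylor remainder is uniform for $\gamma\le1$. I would state this smoothness explicitly rather than assume it. I would also note that the $O(\gamma)$ constant depends on $t$ and $d$ and may blow up as $\sigma_t\to0$, where the denoiser approaches the Voronoi discontinuities of Lemma~\ref{lem:convergence-denoiser}.
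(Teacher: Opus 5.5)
Your proof is correct and follows the paper's argument essentially step for step: the softmax-Jacobian computation giving $\nabla_g\hat\epsilon^*(g,t)=\frac{1}{\sigma_t}I_d-\frac{1}{\sigma_t^3}\mathrm{Cov}(Y)$, then Hutchinson's identity combined with a first-order Taylor expansion of the finite difference. Your use of the scaled points $y_i=\sqrt{\bar\alpha_t}x_i$ is only cosmetic, and your explicit remainder bound $\tfrac{H_t}{2}\gamma\sigma_t d^{3/2}$, together with the smoothness argument for $H_t$, makes precise what the paper states more loosely as ``$O(\|h\|^2)/(\gamma\sigma_t)=O(\gamma)$ for fixed $z_k,\sigma_t$.''
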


\begin{proof}
    Let 
    $$m(g):=\sum_i r_i(g)x_i, \quad l_i(g):=-\|g-\sqrt{\bar\alpha_t}x_i\|^2/(2\sigma_t^2),$$
    so $r_i(g)=\exp(l_i(g))/\sum_j\exp(l_j(g))$. Differentiating,
    $$\nabla_g l_i(g) = -(g-\sqrt{\overline\alpha_t}x_i)/\sigma_t^2,$$ and the standard
    softmax-Jacobian identity gives
    \[
    \nabla_g r_i(g) = r_i(g)\cdot\frac{\sqrt{\overline\alpha_t}\big(x_i-m(g)\big)}{\sigma_t^2}.
    \]
    Using 
    $$\nabla_g m(g) = \sum_i x_i\,\nabla_g r_i(g)^\top$$ 
    and
    $$\sum_i r_i(g)\cdot (x_i-m(g))(x_i-m(g))^\top = \mathrm{Cov}_{i\sim r(g)}(x_i),$$
    it holds that (using $m:=m(g)$ and $r_i:=r_i(g)$ as shorthand)
    \begin{align*}
        \nabla_g m(g)=\frac{\sqrt{\overline\alpha_t}}{\sigma_t^2}\left[\sum_i r_i(x_i-m)(x_i-m)^\top+ m\sum_ir_i (x_i-m)^\top\right],
    \end{align*}
    and 
    $$\sum_i r_i(x_i-m)^\top=\left(\sum_i r_ix_i\right)^\top-m^\top\sum_ir_i=m^\top-m^\top=0,$$
    so it holds that
    \[
    \nabla_g m(g) = \frac{\sqrt{\overline\alpha_t}}{\sigma_t^2}\,\mathrm{Cov}_{i\sim r(g)}(x_i).
    \]
    Since 
    $$C=\mathrm{Cov}(\sqrt{\overline\alpha_t}x_I)=\overline\alpha_t\,\mathrm{Cov}_{i\sim r(g)}(x_i),$$
    then we have
    $$\nabla_g m(g) = C/(\sqrt{\overline\alpha_t}\sigma_t^2).$$ 
    Writing
    $J:=\nabla_g\hat\varepsilon^*(g,t)$, where $\hat\varepsilon^*(g,t)$ is defined as in Lemma~\ref{lem:optimal-denoiser}, we have
    \begin{align}
        & J = \frac{1}{\sigma_t}I_d - \frac{\sqrt{\overline\alpha_t}}{\sigma_t}\cdot\frac{C}{\sqrt{\overline\alpha_t}\,\sigma_t^2}
        = \frac{1}{\sigma_t}I_d - \frac{1}{\sigma_t^3}C,
    \\
    &\mathrm{tr}(J) = \frac{d}{\sigma_t} - \frac{1}{\sigma_t^3}\,\mathrm{tr}(C).
    \label{eq:trace_relation}
    \end{align}

    By the Hutchinson trace
    estimator~\citep{hutchinson1989stochastic}, $\mathbb{E}_z[z^\top Jz]=\mathrm{tr}(J)$
    for any fixed matrix $J$ and Rademacher draws $z$. We do not have access to $Jz$
    directly, only to forward evaluations of $\hat\epsilon^*$; by Taylor's
    theorem, for $h=\gamma\sigma_tz_k$,
    \[
    \frac{z_k^\top\big(\hat\epsilon^*(g+\gamma\sigma_tz_k,t)-\hat\epsilon^*(g,t)\big)}{\gamma\sigma_t}
    = z_k^\top Jz_k + \frac{O(\Vert h\Vert^2)}{\gamma\sigma_t},
    \]
    and for fixed $z_k,\sigma_t$ it holds that
    $$O(\Vert h\Vert^2)=O(\gamma^2\sigma_t^2\Vert z_k\Vert^2)=O(\gamma^2),$$
    so we have that
    \[
    \frac{z_k^\top\big(\hat\epsilon^*(g+\gamma\sigma_tz_k,t)-\hat\epsilon^*(g,t)\big)}{\gamma\sigma_t}
    = z_k^\top Jz_k + O(\gamma).
    \]
    Averaging over $q$ draws and taking expectation,
    \begin{equation}
    \mathbb{E}[\hat V] = \mathrm{tr}(J) + O(\gamma).
    \label{eq:dhat_estimates_trJ}
    \end{equation}
    This $O(\gamma)$ term is specific to our finite-step approximation of the
    exact matrix-vector product assumed by the standard estimator, and is the
    only respect in which $\hat V$ deviates from it.
     
    Finally, substituting Equation~\ref{eq:trace_relation}
    into Equation~\ref{eq:dhat_estimates_trJ} gives Equation~\ref{eq:main_claim}.
\end{proof}
\section{Synthetic Visualizations}
\label{sec:appendix-experiments}

\begin{figure*}[t]
    \centering
    \includegraphics[width=\textwidth]{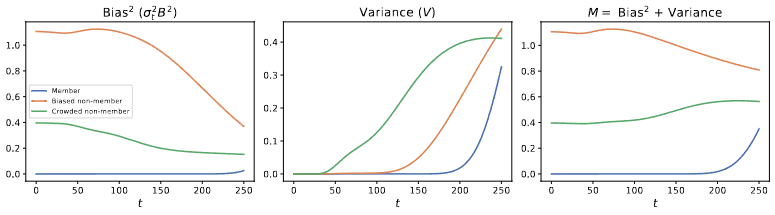}
    \caption{Membership signal of Bias and Variance term from the decomposition in Equation~\ref{eq:M_decomposition} from idealized denoiser over simulated data. The variance term is more effective at capturing membership signal of held-out data in a dense region of member data.}
    \label{fig:bias-variance}
\end{figure*}

\begin{figure*}[t]
    \centering
    \includegraphics[width=\textwidth]{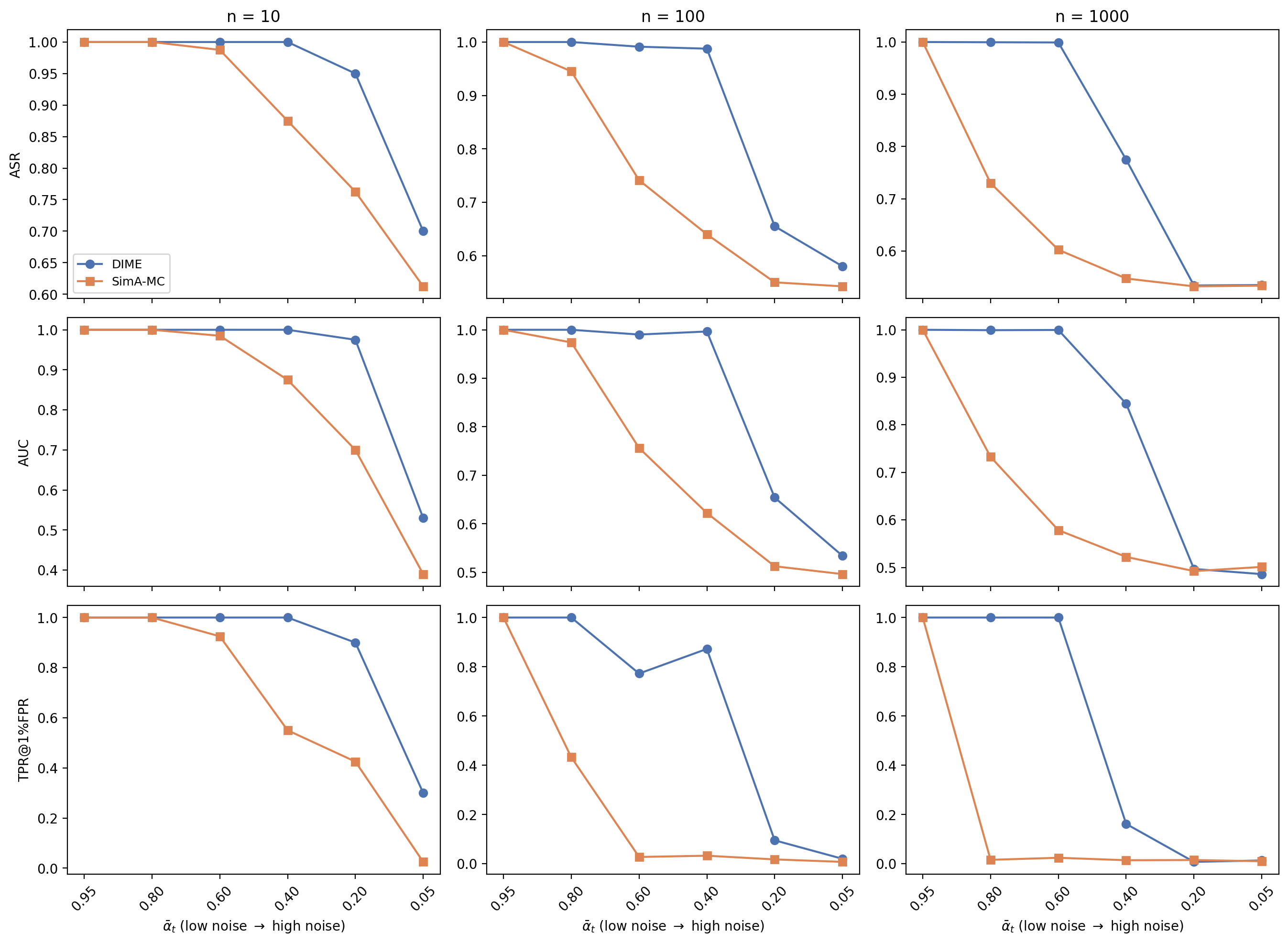}
    \caption{\DIME vs. SimA-MC for the idealized denoiser on simulated data with $q=10$, over various dataset sizes. The additional crowding signal captured in \DIME improves membership signal decay with more diffusion noise.}
    \label{fig:q-fixed-grid}
\end{figure*}

\begin{figure*}[t]
    \centering
    \includegraphics[width=\textwidth]{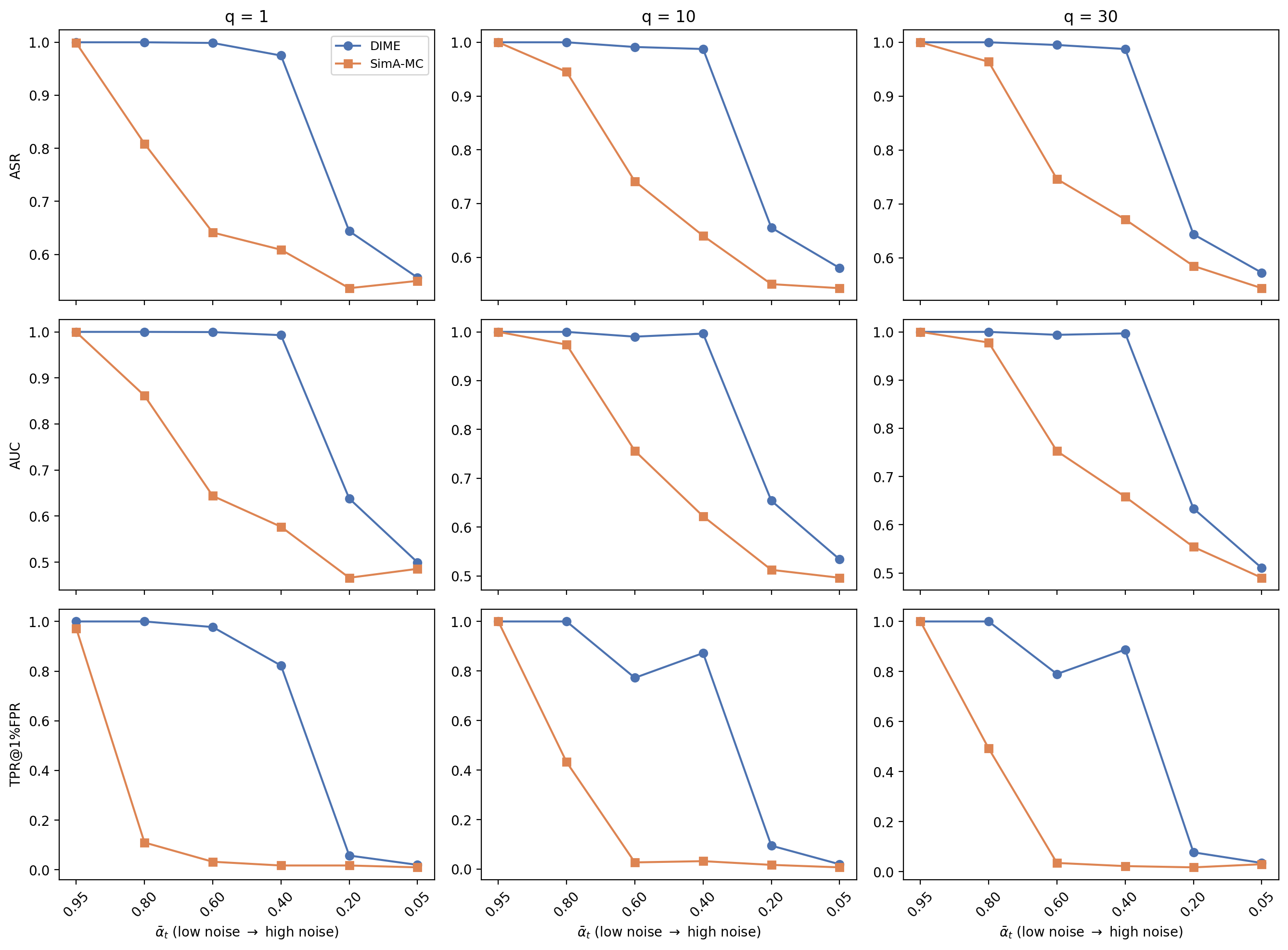}
    \caption{\DIME vs. SimA-MC for the idealized denoiser on simulated data with $n=100$, over various query budgets. The additional crowding signal captured in \DIME improves membership signal decay with more diffusion noise.}
    \label{fig:n-fixed-grid}
\end{figure*}

\begin{figure*}[t]
    \centering
    \includegraphics[width=\textwidth]{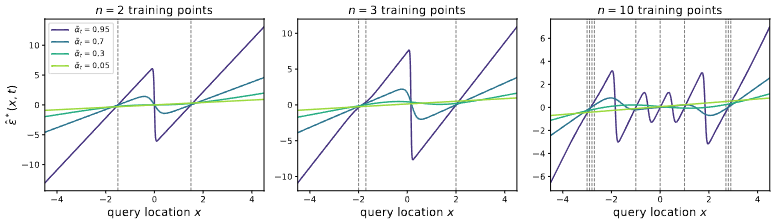}
    \caption{\textbf{The idealized denoiser on synthetic 1D data.} $\hat\epsilon^*(x,t)$, computed exactly from the closed-form optimal denoiser, on synthetic training sets of size $n=2,3,10$ (vertical dashed lines mark training points), at four noise levels $\overline\alpha_t$.}
    \label{fig:synthetic-denoiser}
\end{figure*}

This appendix supplements the main body with experiments conducted in a
fully synthetic, idealized setting: rather than a trained neural network
approximating $\hat\epsilon^*$, we use the exact closed-form idealized denoiser (Lemma~\ref{lem:optimal-denoiser}) directly, computed exactly from a small, explicitly constructed training set. This removes any
confound from network approximation error, isolating whether the theoretical constructions and empirical trends described in the main body hold true of the underlying \emph{ideal} object itself. These experiments
are illustrative and diagnostic rather than a substitute for the real-network results in Section~\ref{sec:experiments}, and are not intended to be read as a continuous narrative with one another.
 
\subsection{Validating the bias-variance decomposition}
 
Figure~\ref{fig:bias-variance} directly validates the exact decomposition
$M = \sigma_t^2B^2 + V$ (Equation~\ref{eq:M_decomposition})
against the three-case construction motivating our approach
(Section~\ref{sec:theory}): a genuinely isolated member, an isolated
non-member whose nearest training point lies elsewhere (\emph{biased}),
and a non-member embedded in a dense cluster of training points whose
responsibility-weighted centroid happens to coincide with the candidate
(\emph{crowded}). The bias$^2$ panel confirms the intended construction:
the isolated member's bias remains near zero across nearly the entire
displayed range, while both non-member constructions show substantial
bias, with the crowded case's bias notably lower than the biased case's
by design. The variance panel shows the key, motivating asymmetry: the
crowded non-member's variance rises far earlier (before $t\approx50$)
than either the member's or the biased non-member's, both of which remain
near zero until much later in the range. This is precisely the signature predicted in Section~\ref{sec:theory}: a candidate whose bias alone would appear small can still be correctly flagged once variance is taken into account, and the earliest, clearest such signal in this construction comes
specifically from the crowded case. We note the displayed range is
deliberately restricted to where this asymmetry is visible; at
sufficiently large $t$, even the isolated member's estimation error rises
as noise begins to overwhelm any fixed geometric separation, consistent
with Lemma~\ref{lem:convergence-denoiser-appendix}'s asymptotic result being specifically
a small-$t$ statement.
 
\subsection{DIME versus SimA-MC across dataset size, at fixed query budget}
 
Figure~\ref{fig:q-fixed-grid} compares DIME against SimA-MC in the
idealized setting across three synthetic dataset sizes ($n=10,100,1000$),
at a fixed query budget, with query-averaged attack effectiveness plotted
against the noise level $\overline\alpha_t$ used at query time. DIME
outperforms SimA-MC across every metric (ASR, AUC, TPR@1\%FPR), every
dataset size, and nearly every noise level, with the gap most pronounced
at higher noise levels, where SimA-MC's performance collapses toward
chance while DIME retains substantially more signal. This advantage
persists as $n$ grows across two orders of magnitude, suggesting DIME's
improvement over a level-only statistic is not an artifact of the small,
low-dimensional training sets used in this synthetic construction.
 
\subsection{DIME versus SimA-MC across query budget, at fixed dataset size}
 
Figure~\ref{fig:n-fixed-grid} complements the above by fixing dataset size
and instead varying the query budget ($q=1,10,30$). The same qualitative
pattern holds: \DIME's advantage over SimA-MC is consistent across all
three query budgets, indicating the gap is not primarily an artifact of
how many queries either method is given, but reflects a persistent
difference in the two statistics themselves. We note that both methods'
bias components share the same $q$ perturbed queries in this
construction, and \DIME's crowding term is computed via the exact closed-form Jacobian trace rather than the finite-sample Hutchinson estimator used in our real-network experiments (Section~\ref{sec:theory});
this synthetic comparison therefore reflects the idealized, zero-estimation-noise
crowding signal, not the practical $q+1$-query estimator's behavior.
 
\subsection{Illustrating the idealized denoiser in one dimension}
 
Figure~\ref{fig:synthetic-denoiser} visualizes $\hat\varepsilon^*(x,t)$
directly on simple, one-dimensional synthetic training sets of size $n=2,3,10$, at several noise levels, with vertical dashed lines marking
training point locations. Two properties predicted by
Corollary~\ref{cor:error-decomposition} are directly visible: first, at low noise (dark curves), $\hat\varepsilon^*$ crosses zero \emph{exactly} at each
training point and grows as the query moves away from it, consistent with $\hat\epsilon^*$ being proportional to the bias vector $(x-m(x))$; second,
as noise increases (lighter curves), this structure is progressively
smoothed away, consistent with responsibility spreading across the
training set rather than concentrating on individual points. The $n=10$
panel additionally illustrates richer, multi-modal behavior near the two
deliberately-constructed clusters (points near $x=-3$ and $x=3$),
foreshadowing the crowding phenomenon central to
Section~\ref{sec:theory}.

\end{document}